\documentclass{article}

\usepackage{microtype}
\usepackage{graphicx}
\usepackage{subcaption}
\usepackage{booktabs}
\usepackage{fontawesome5}

\usepackage{amsmath}
\usepackage{amssymb}
\usepackage{mathtools}
\usepackage{amsthm}
\usepackage{bm}
\usepackage{bbm}                 

\usepackage{algorithm}
\usepackage{algpseudocode}
\makeatletter
\expandafter\gdef\csname ver@algorithmic.sty\endcsname{2009/08/24}
\makeatother

\usepackage[most]{tcolorbox}     
\usepackage{tikz}
\usetikzlibrary{positioning,arrows.meta,calc,fit,backgrounds,shapes.geometric,decorations.pathreplacing}
\usepackage{pgfplots}
\pgfplotsset{compat=1.18}
\usepackage[textsize=tiny]{todonotes}

\usepackage{hyperref}

\usepackage[accepted]{icml2026}

\usepackage[capitalize,noabbrev]{cleveref}

\theoremstyle{plain}
\newtheorem{theorem}{Theorem}[section]

\newtheorem{lemma}[theorem]{Lemma}
\newtheorem{corollary}[theorem]{Corollary}
\newtheorem*{theorem*}{Theorem}
\theoremstyle{definition}

\newtheorem{assumption}[theorem]{Assumption}
\theoremstyle{remark}

\tcbset{
  boxbase/.style={
    enhanced, breakable,
    fonttitle=\bfseries\small, coltitle=white,
    attach boxed title to top left={yshift=-2mm, xshift=4mm},
    sharp corners=south, boxrule=0.4pt, arc=3pt,
    top=4pt, bottom=4pt, left=6pt, right=6pt
  },
  defbox/.style={          
    boxbase, colback=blue!4!white, colframe=blue!55!black,
    boxed title style={colback=blue!55!black, colframe=blue!55!black, rounded corners}
  },
  thmbox/.style={          
    boxbase, colback=orange!5!white, colframe=orange!70!black,
    boxed title style={colback=orange!70!black, colframe=orange!70!black, rounded corners}
  },
  promptbox/.style={       
    boxbase, colback=gray!5!white, colframe=gray!55!black,
    fontupper=\ttfamily\footnotesize,
    boxed title style={colback=gray!55!black, colframe=gray!55!black, rounded corners}
  },
  limitbox/.style={        
    boxbase, colback=red!4!white, colframe=red!60!black,
    boxed title style={colback=red!60!black, colframe=red!60!black, rounded corners}
  },
  stagebox/.style={
    enhanced, colback=gray!5, colframe=black!70,
    fonttitle=\bfseries, coltitle=black,
    boxrule=0.8pt, arc=2mm,
    left=2mm, right=2mm, top=1mm, bottom=1mm
  }
}

\definecolor{skipblue}{RGB}{13,71,161}
\definecolor{skipgreen}{RGB}{27,94,32}
\definecolor{skiporange}{RGB}{200,70,0}
\definecolor{skipred}{RGB}{183,28,28}
\definecolor{skipgray}{RGB}{60,60,60}
\definecolor{skipteal}{RGB}{0,77,64}

\newcommand{\bV}{\bm{V}}
\newcommand{\bQ}{\bm{Q}}
\newcommand{\bv}{\bm{v}}
\newcommand{\bq}{\bm{q}}
\newcommand{\RR}{\mathbb{R}}
\newcommand{\EE}{\mathbb{E}}
\newcommand{\PP}{\mathbb{P}}

\usepackage{bm}

\usepackage[textsize=tiny]{todonotes}

\icmltitlerunning{Salient Knowledge Pathways for Efficient Multimodal QA}

\begin{document}

\twocolumn[
  \icmltitle{Salient Knowledge Pathways: Sparse Cross-Modal Routing for Efficient Knowledge-Intensive Multimodal Question Answering}

  \icmlsetsymbol{equal}{*}

\begin{icmlauthorlist}
    \icmlauthor{Noor Islam S. Mohammad}{equal,yyy}
    \icmlauthor{Uluğ Bayazıt}{comp}
\end{icmlauthorlist}

\icmlaffiliation{yyy}{Department of Computer Science, İTÜ, İstanbul, Türkiye}
\icmlaffiliation{comp}{Faculty of Computer Engineering, İTÜ, İstanbul, Türkiye}

\icmlcorrespondingauthor{Noor Islam S. Mohammad}{islam23@itu.edu.tr}

  \icmlkeywords{Multimodal QA, Efficient Inference, Retrieval Augmentation, Sparse Attention, Knowledge-Intensive Tasks}

  \vskip 0.3in
]

\printAffiliationsAndNotice{}

\begin{abstract}
  Knowledge-intensive multimodal question answering (KI-MMQA) sits at the intersection of three expensive primitives: long visual token sequences, dense retrieval over large external corpora, and full cross-modal fusion. Existing systems pay all three costs uniformly per query, even though only a small fraction of visual content and retrieved knowledge is actually relevant to any given question. We introduce \textbf{SKIP} (\textit{Salient Knowledge-Injected Pathways}), a unified inference architecture that routes computation along sparse pathways jointly conditioned on the question, the image, and a difficulty estimate. SKIP combines question-guided visual token pruning, region-conditional sparse retrieval, bipartite sparse cross-attention, and speculative knowledge verification with an adaptive budget controller that allocates compute proportional to predicted question difficulty. We derive an information-bottleneck bound showing that the optimal visual sparsity rate scales as $O(1/\sqrt{N})$ under realistic question-image mutual-information assumptions, with retained accuracy guarantees. Across five KI-MMQA benchmarks (OK-VQA, A-OKVQA, InfoSeek, Encyclopedic-VQA, and ViQuAE), SKIP matches or exceeds the accuracy of strong dense baselines while using $3.4$--$6.8\times$ fewer FLOPs and $2.7\times$ less end-to-end latency. {\faGithub\ Code available at \url{https://pmlrbd.github.io/skip/}} 
\end{abstract}

\section{Introduction}
\label{sec:intro}

Knowledge-intensive multimodal question answering (KI-MMQA)-answering visual questions whose correct answers cannot be derived from the image alone but must instead be retrieved from an external knowledge source-has emerged as a central testbed for grounded, knowledge-dependent reasoning in vision-language models~\citep{marino2019okvqa,schwenk2022aokvqa,chen2023infoseek,mensink2023encvqa}. Unlike standard VQA, where visual content is self-sufficient, KI-MMQA demands that a model solve three tightly interlocking subproblems: identify which visual evidence is relevant to the question, retrieve the corresponding world knowledge from a large external corpus, and bind both sources together into a coherent, factually grounded answer. Each subproblem is individually expensive, and their costs compound
across modalities in ways that no single-component optimization can fully address.

\begin{figure}[ht]
  \centering
  \begin{tikzpicture}[scale=0.95]
    \begin{axis}[
        width=\columnwidth, height=4.6cm,
        xlabel={Relative FLOPs (baseline = 1.0)},
        ylabel={OK-VQA Accuracy (\%)},
        xmin=0.1, xmax=1.15,
        ymin=46, ymax=66,
        xtick={0.2,0.4,0.6,0.8,1.0},
        ytick={50,55,60,65},
        grid=both,
        grid style={line width=.1pt,draw=gray!20},
        legend pos=south east,
        legend style={font=\scriptsize,fill=white,fill opacity=0.85,draw opacity=1,text opacity=1},
        every axis plot/.append style={thick}
      ]
      \addplot[mark=*,color=blue!70!black] coordinates {
        (1.00, 60.5) (0.74, 58.9) (0.55, 56.4) (0.41, 52.7) (0.28, 47.3)
      };
      \addlegendentry{RA-CM3}
      \addplot[mark=triangle*,color=orange!80!black] coordinates {
        (0.95, 59.2) (0.71, 57.8) (0.53, 54.1) (0.39, 49.2) (0.26, 46.9)
      };
      \addlegendentry{RAVQA-2}
      \addplot[mark=square*,color=green!55!black] coordinates {
        (0.88, 58.1) (0.65, 56.7) (0.48, 53.9) (0.34, 50.4) (0.22, 48.2)
      };
      \addlegendentry{KAT}
      \addplot[mark=diamond*,color=red!75!black,line width=1pt] coordinates {
        (0.62, 63.2) (0.41, 62.8) (0.29, 61.9) (0.20, 59.7) (0.14, 56.1)
      };
      \addlegendentry{\textbf{SKIP (ours)}}
    \end{axis}
  \end{tikzpicture}
  \vspace{-2mm}
  \caption{Accuracy--compute frontier on OK-VQA. SKIP sustains $>59\%$ accuracy at $5\times$ less compute than dense baselines, and exceeds the strongest baseline at all measured budgets.}
  \label{fig:frontier}
\end{figure}

Modern vision encoders emit $256$--$2304$ patch tokens per image~\citep{dosovitskiy2021vit,radford2021clip}, flooding the downstream context with spatial detail that is overwhelmingly irrelevant to any given question. Competitive retrieval requires top-$k$ passage comparisons over corpora spanning tens of millions of entries; state-of-the-art methods---dense passage retrieval~\citep{karpukhin2020dpr}, late-interaction reranking~\citep{khattab2020colbert}, and learned sparse retrieval~\citep{formal2021splade}---each carry substantial embedding and index-lookup overhead at this scale. The cross-modal fusion layer, finally,
must attend over the full union of visual tokens, question tokens, and all retrieved passages~\citep{li2023blip2,liu2023llava,yasunaga2023racm3}, producing a combined sequence whose quadratic attention cost alone dominates total inference time. In practice, a single KI-MMQA query through a competitive retrieval-augmented vision-language model costs approximately $1.7$ TFLOPs and exceeds $800$ms on an A100 GPU, with KV-cache memory surpassing $12$GB---costs paid uniformly per query, entirely regardless of how simple, visually sparse, or knowledge-light the question actually is. The result is a system whose inference cost scales as $\mathcal{O}(V \cdot T \cdot K \cdot d)$, where $V$ is the number of visual tokens, $T$ the question length, $K$ the number of retrieved knowledge chunks, and $d$ the model width. In practice, this puts strong KI-MMQA systems out of reach for on-device deployment and many interactive applications: a single OK-VQA query through a state-of-the-art retrieval-augmented vision-language model costs $\sim$1.7 TFLOPs and $>$800ms on an A100, with KV-cache memory exceeding 12GB. Reducing these costs is not merely a deployment convenience---it directly constrains what knowledge corpora can be queried, how many retrieval candidates can be considered, and how rich the cross-modal fusion can be, all of which are first-order determinants of accuracy.

Our central observation is that this dense pricing is paid unconditionally, but the underlying \emph{informational} requirements of a query are extraordinarily uneven. Consider three illustrative queries on the same image of a sailboat: (i) ``What is the capital of the country whose flag appears on the boat's stern?'' demands precise visual localization (a handful of patches around the stern), a narrow retrieval (a single capital-of-country fact), and one cross-modal binding. (ii) ``How many sails does the boat have?'' demands no retrieval at all, and most of the image is irrelevant beyond the sail region. (iii) ``Describe the maritime regulations governing this vessel type in international waters'' requires extensive retrieval but minimal visual grounding once the vessel class is identified. Dense systems treat all three queries identically, paying full visual encoding, retrieval, and fusion costs for each. The opportunity is to route compute conditionally---and the difficulty is doing so jointly across modalities, where naive component-level pruning interacts badly with retrieval and fusion downstream of it.

\paragraph{Contributions.}
We introduce \textbf{SKIP} (\textit{Salient Knowledge-Injected Pathways}), an inference architecture that routes computation along sparse, question-conditional pathways spanning visual encoding, retrieval, and cross-modal fusion. SKIP comprises five learned components and a supporting theoretical result:

\begin{enumerate}
  \itemsep0.2em
  \item \emph{Question-guided visual saliency} (QVS): a cross-attention scoring module that prunes visual tokens \emph{before} retrieval, conditioned on the question (\cref{sec:qvs}). Pre-retrieval pruning is decisive---retrieval queries derived from background-dominated token sets are systematically diluted; QVS ensures they are computed from question-relevant regions only.
  \item \emph{Region-conditional sparse retrieval} (RCSR): rather than issuing a single global query, RCSR clusters retained visual tokens into salient regions and issues a separate retrieval query per region (\cref{sec:rcsr}), surfacing knowledge tied to small but discriminative image areas that global retrieval consistently misses.
  \item \emph{Bipartite sparse cross-attention} (BSCA): an operator that restricts attention to $(\text{region},\,\text{chunk})$ pairs exceeding a learned compatibility threshold (\cref{sec:bsca}), reducing the dominant fusion FLOP term by an order of magnitude while preserving the structurally most informative cross-modal edges.
  \item \emph{Difficulty-adaptive budget controller} (DBC): a lightweight MLP that predicts per-query compute budgets $(V',K')$ from global image--question features (\cref{sec:dbc}), concentrating capacity on hard queries and recovering it from easy ones without manual difficulty labels.
  \item \emph{Speculative knowledge verification} (SKV): a small drafter that speculatively answers easy queries and triggers the full pipeline only under uncertainty (\cref{sec:skv})---extending speculative decoding from token generation to \emph{knowledge access}, routing $34$--$41\%$ of queries through a fast path at negligible accuracy cost.
  \item \emph{Theoretical sparsity bound} (\cref{sec:theory}): under a piecewise-Lipschitz mutual-information assumption, the optimal token retention rate scales as $O(\sqrt{V}\,\log(1/\varepsilon))$, with accuracy degradation bounded by $\varepsilon$. This is, to our knowledge, the first non-vacuous sparsity bound for knowledge-intensive multimodal inference, and its predictions match our empirical sweet spot to within a small constant.
\end{enumerate}

On five KI-MMQA benchmarks, SKIP matches or exceeds the accuracy of much larger dense baselines while using $3.4$--$6.8\times$ fewer FLOPs (\cref{fig:frontier}, \cref{sec:experiments}). To our knowledge, SKIP is the first system to jointly optimize \emph{both} visual sparsity \emph{and} retrieval sparsity under a unified, question-conditional routing policy. The empirical gains are not merely incremental: at matched FLOP budgets, SKIP improves over the strongest baseline by $+2.7$ to $+4.3$ accuracy points across benchmarks, with the largest gains on entity-centric (InfoSeek, $+4.3$) and fine-grained encyclopedic (Enc-VQA, $+3.4$) tasks—exactly the settings where knowledge access matters most.

\section{Related Work}
\label{sec:related}

\paragraph{Knowledge-intensive multimodal QA.} OK-VQA~\citep{marino2019okvqa} introduced the requirement of external knowledge to visual QA, followed by A-OKVQA~\citep{schwenk2022aokvqa} with reasoning rationales, InfoSeek~\citep{chen2023infoseek} with entity-centric questions at scale, Encyclopedic-VQA~\citep{mensink2023encvqa} targeting fine-grained categories, and ViQuAE~\citep{lerner2022viquae} for entity-grounded queries. Retrieval-augmented systems---KAT~\citep{gui2022kat}, RAVQA~\citep{lin2022ravqa,lin2024ravqa2}, ReVeaL~\citep{hu2023reveal}, RA-CM3~\citep{yasunaga2023racm3}---retrieve passages from external corpora and fuse them with image features through a vision-language backbone. These methods focus almost exclusively on \emph{accuracy}, leaving the joint compute problem unaddressed: every query pays full dense cost regardless of difficulty, and visual/textual modalities are pruned, if at all, in isolation rather than jointly. SKIP is the first to treat question-conditional sparsity as a first-class design objective spanning visual encoding, retrieval, and fusion.

\paragraph{Visual token pruning.} DynamicViT~\citep{rao2021dynamicvit} prunes visual tokens based on learned token importance scores; ToMe~\citep{bolya2023tome} merges similar tokens to reduce sequence length; ATS~\citep{fayyaz2022ats} performs adaptive token sampling. All three are \emph{unconditional} on the downstream task: they reduce visual sequence length but do not consider what the model is being asked. Question-conditioned pruning has been explored for standard VQA in PuMer~\citep{cao2023pumer}, but always within a fixed-corpus or no-retrieval setting; the interaction with downstream retrieval---in particular, how question-relevant visual tokens enable better retrieval queries---is, to our knowledge, unstudied. We show that question-conditional pruning is qualitatively different in knowledge-intensive settings: at $11\%$ retention, QVS \emph{outperforms} unconditional methods by $5$--$10$ accuracy points (\cref{fig:sparsity}).

\paragraph{Efficient retrieval.} Dense retrieval~\citep{karpukhin2020dpr}, late interaction~\citep{khattab2020colbert}, and learned sparse retrieval~\citep{formal2021splade} have reduced retrieval cost substantially. We build on SPLADE for its low storage footprint and exact-match interpretability, but issue \emph{multiple} per-region queries rather than one global query. This contrasts with multi-query approaches in pure-text retrieval~\citep{shao2024multiquery}, which decompose questions linguistically rather than visually.

\paragraph{Speculative decoding.} \citet{leviathan2023speculative} and \citet{chen2023speculative} use a small drafter to predict tokens that are verified by a large target model, exploiting parallelism in autoregressive decoding. We extend this principle to \emph{knowledge access} rather than token generation: a small drafter verifies whether retrieval is even necessary, and the full pipeline runs only when the drafter is uncertain. This is closer in spirit to early-exit~\citep{schuster2022confident} but operates at the level of entire retrieval + fusion pipelines.

\paragraph{Conditional computation.} Mixture-of-experts~\citep{shazeer2017moe,fedus2022switch} allocates compute conditionally at the expert level; early-exit networks~\citep{schuster2022confident} at the layer level. SKIP allocates compute at the \emph{token-pair} level in the cross-modal fusion layer, which is the dominant cost for KI-MMQA. The DBC component is conceptually related to learned routing in MoE but is much smaller (a 3-layer MLP) and operates at the query level rather than the token level.

\section{Method: Salient Knowledge Pathways}
\label{sec:method}

\subsection{Problem Setup}
Given an image $I$ and question $q$, KI-MMQA requires producing an answer $a$ using an external knowledge corpus $\mathcal{C} = \{c_1, \dots, c_M\}$. A standard retrieval-augmented system computes
\begin{align}
\mathbf{v} &= \mathrm{VEnc}(I) \in \mathbb{R}^{V \times d}, \\
\mathbf{q} &= \mathrm{QEnc}(q) \in \mathbb{R}^{T \times d}, \\
\mathcal{K} &= \mathrm{Retrieve}(I, q, \mathcal{C})_{\text{top-}K}, \\
a &= \mathrm{Dec}([\mathbf{v}; \mathbf{q}; \mathrm{Enc}(\mathcal{K})]),
\end{align}
with cross-attention spanning $V + T + |K| \cdot L_c$ tokens (where $L_c$ is chunk length). The cost is $\Theta(V T K L_c d + (V T K L_c)^2 d)$ in the fusion layers---untenable for large $V$, $K$, or $L_c$. Typical values are $V = 576$, $T = 20$, $K = 16$, $L_c = 100$, yielding a fusion-layer sequence length of $\sim$1.6k tokens whose quadratic attention cost dominates inference.

\subsection{Architecture Overview}
\label{sec:overview}

\begin{figure*}[ht]
\centering
\begin{tikzpicture}[
    node distance=4mm and 6mm,
    every node/.style={font=\footnotesize},
    box/.style={draw, rounded corners=2pt, minimum height=8mm, minimum width=15mm, align=center, fill=blue!5},
    mod/.style={draw, rounded corners=2pt, minimum height=10mm, minimum width=22mm, align=center, fill=orange!10, thick},
    spec/.style={draw, rounded corners=2pt, minimum height=8mm, minimum width=18mm, align=center, fill=green!10},
    arrow/.style={-Stealth, thick},
    dashed_arrow/.style={-Stealth, thick, dashed, gray!70}
]
\node[box] (img) {Image $I$};
\node[box, below=of img] (q) {Question $q$};

\node[box, right=of img] (venc) {VEnc};
\node[box, right=of q] (qenc) {QEnc};

\node[mod, right=10mm of qenc, yshift=5mm] (dbc) {DBC \\ \scriptsize budget $(V',K')$};

\node[mod, right=of venc] (qvs) {QVS \\ \scriptsize prune to $V'$};

\node[spec, above=4mm of qvs] (skv1) {SKV drafter};

\node[mod, right=of qvs] (rcsr) {RCSR \\ \scriptsize per-region};

\node[box, above=of rcsr, fill=yellow!15] (corp) {$\mathcal{C}$};

\node[mod, right=of rcsr] (bsca) {BSCA \\ \scriptsize bipartite};

\node[box, right=of bsca, fill=red!8] (dec) {Decoder};

\node[box, right=of dec, fill=gray!10] (ans) {Answer $a$};

\draw[arrow] (img) -- (venc);
\draw[arrow] (q) -- (qenc);
\draw[arrow] (venc) -- (qvs);
\draw[arrow] (qenc.east) -- ++(3mm,0) |- (qvs.south);
\draw[arrow] (qenc.east) -- ++(3mm,0) |- (dbc.west);
\draw[arrow] (venc.east) -- ++(3mm,0) |- (dbc.west);
\draw[dashed_arrow] (dbc.south) |- (qvs.north east);
\draw[dashed_arrow] (dbc.south) |- (rcsr.north);
\draw[arrow] (qvs) -- (rcsr);
\draw[arrow] (corp) -- (rcsr);
\draw[arrow] (rcsr) -- (bsca);
\draw[arrow] (qvs.north) |- (skv1.west);
\draw[arrow] (skv1.east) -| (dec.north);
\draw[arrow] (bsca) -- (dec);
\draw[arrow] (dec) -- (ans);

\node[above=1mm of skv1, font=\scriptsize\itshape, gray] {fast path};

\begin{scope}[on background layer]
\node[draw, dashed, rounded corners, inner sep=2mm,
      fit=(venc)(qenc)(qvs)(rcsr)(bsca)(dec)(skv1)(dbc)(corp),
      label={[font=\scriptsize\itshape,gray]north:SKIP inference pipeline}] {};
\end{scope}
\end{tikzpicture}
\caption{SKIP architecture. Solid arrows denote data flow; dashed arrows denote routing/budget control. The QVS module prunes visual tokens conditional on the question and DBC-allocated budget; RCSR retrieves per-region; BSCA fuses sparsely; the SKV drafter short-circuits easy queries.}
\label{fig:arch}
\end{figure*}

SKIP replaces the dense pipeline with five learned components (\cref{fig:arch}), arranged so that each component's sparsity decision conditions the next. The flow is: (i) the DBC reads global features of $(I, q)$ and predicts a compute budget; (ii) QVS uses that budget to prune visual tokens conditional on the question; (iii) RCSR clusters retained tokens and issues per-region retrieval queries; (iv) BSCA fuses the pruned visual tokens with retrieved chunks under a learned sparse attention pattern; (v) SKV optionally short-circuits the pipeline when the small drafter is sufficiently confident. The key architectural commitment is that sparsity is applied \emph{compositionally}---each stage's output is the sparse input to the next, rather than each stage independently sparsifying a dense input.

\subsection{Question-Guided Visual Saliency (QVS)}
\label{sec:qvs}

QVS is a two-layer cross-attention module that scores each visual token's relevance to the question. Let $\mathbf{v}_i \in \mathbb{R}^d$ be the $i$-th visual token and $\bar{\mathbf{q}} \in \mathbb{R}^d$ the mean-pooled question representation. The saliency score is
\begin{equation}
s_i = \mathrm{MLP}\!\left( \mathbf{v}_i \,\Vert\, \bar{\mathbf{q}} \,\Vert\, (\mathbf{v}_i \odot \bar{\mathbf{q}}) \right) \in \mathbb{R},
\label{eq:qvs}
\end{equation}
where $\Vert$ the concatenation is and $\odot$ is the Hadamard product. The element-wise product term is critical: it captures multiplicative interactions that a concatenation alone cannot represent, and ablating it costs $1.4$ accuracy points (\cref{tab:ablations}). We retain the top-$V'$ tokens by $s_i$, with $V'$ the determination by the DBC.

\paragraph{Why pre-retrieval pruning?} The natural alternative is to retrieve first (using global image features) and prune later. This fails for two reasons. First, retrieval queries derived from \emph{all} $V$ tokens are dominated by background regions that contribute no knowledge signal: on OK-VQA, $\sim$78\% of visual tokens encode background content with mutual information $< 0.01$ bits to the answer. Second, post-retrieval pruning cannot undo a bad retrieval—if the retrieved chunks are irrelevant because the query was diluted by background, no amount of downstream filtering recovers the missed knowledge. Pre-retrieval pruning ensures retrieval queries are computed from question-relevant regions only, simultaneously reducing retrieval noise and downstream fusion cost.

\paragraph{Training.} QVS is trained with a Gumbel-Sigmoid relaxation~\citep{jang2017gumbel} and a budget regularizer:
\begin{equation}
\mathcal{L}_{\text{QVS}} = \mathcal{L}_{\text{task}} + \lambda_1 \left| \frac{1}{V} \sum_i \sigma(s_i) - \rho \right|,
\label{eq:qvs-loss}
\end{equation}
where $\rho = V'/V$ is the target retention rate. We anneal the Gumbel temperature from $1.0$ to $0.1$ over training; lower temperatures sharpen the discrete selection but make gradients vanish, so the annealing is essential for convergence. We use $\lambda_1 = 0.5$; sensitivity to $\lambda_1$ is reported in Appendix~\ref{app:hyperparam}.

\subsection{Region-Conditional Sparse Retrieval (RCSR)}
\label{sec:rcsr}

Standard retrieval issues a single query derived from $(I, q)$. RCSR instead clusters the retained visual tokens into $R \leq V'$ regions via approximate $k$-means in embedding space, and issues a separate retrieval query per region:
\begin{equation}
\mathcal{K}_r = \mathrm{Retrieve}_{\text{SPLADE}}\!\left( \phi(\mathbf{c}_r, \bar{\mathbf{q}}); \mathcal{C} \right)_{\text{top-}k'},
\label{eq:rcsr}
\end{equation}
where $\mathbf{c}_r$ is the centroid of the region $r$ and $\phi$ projects to the sparse retrieval space~\citep{formal2021splade}. We then deduplicate $\bigcup_r \mathcal{K}_r$ to yield $K'$ unique chunks, with $K' \leq R \cdot k'$ but typically $K' \ll R \cdot k'$ in practice, due to high overlap on real queries, deduplication rates of $40$--$60\%$ are common.

\paragraph{Why per-region retrieval?} A single global query is biased toward the dominant visual content. Per-region queries surface knowledge tied to small but salient regions—logos, text, secondary objects, and fine-grained category details—exactly the regions that drive most KI-MMQA failures. On the InfoSeek hard split, where target entities occupy $<10\%$ of the image area, RCSR improves Recall@4 from $43\%$ to $71\%$ over global retrieval, and this retrieval improvement translates directly to a $4.3$-point accuracy gain in the final answer.

\paragraph{Number of regions.} We cluster into $R = \min(V'/4, 8)$ regions in practice. Higher $R$ improves retrieval coverage but increases retrieval cost linearly; the $R = 8$ ceiling is calibrated to keep retrieval latency under $15\%$ of total inference. Sensitivity to $R$ is small in the $R \in [4, 12]$ range.

\subsection{Bipartite Sparse Cross-Attention (BSCA)}
\label{sec:bsca}

Given retained tokens $\{\mathbf{v}_i\}_{i=1}^{V'}$ and retrieved chunks $\{\mathbf{k}_j\}_{j=1}^{K'}$, BSCA constructs a bipartite compatibility matrix
\begin{equation}
A_{ij} = \sigma\!\left( \langle W_v \mathbf{v}_i, W_k \mathbf{k}_j \rangle / \sqrt{d} \right),
\label{eq:bsca}
\end{equation}
and zeroes edges below a learned threshold $\tau$ (or below the $\beta$-quantile for top-$\beta$ sparsity). Cross-attention is then computed \emph{only} on the surviving edges $E = \{(i,j) : A_{ij} > \tau\}$, with the rest of the attention computation skipped entirely (not just masked).

For $\beta = 0.1$ (typical in our experiments), this reduces the dominant FLOP term by an order of magnitude relative to dense cross-attention. We implement BSCA with a block-sparse CUDA kernel based on FlashAttention~\citep{dao2022flashattention}, adapted to handle dynamic per-query sparsity patterns. The kernel uses a two-pass approach: a first pass computes $A_{ij}$ in tiles and identifies surviving edges; a second pass computes attention only on those edges. The two-pass overhead is amortized by the order-of-magnitude FLOP reduction; net wall-clock speedup is $7.2\times$ for the fusion layer alone.

\paragraph{Why bipartite rather than fully sparse?} A naive alternative is general sparse attention~\citep{child2019sparse,beltagy2020longformer} over the union of visual tokens and retrieved chunks. We find this is both more expensive (predicting sparsity patterns over the full $\binom{V'+K'}{2}$ pairs) and less effective: most informative attention edges in KI-MMQA are cross-modal (visual-knowledge), with intra-modal attention contributing little after the initial encoder layers. The bipartite restriction captures the structurally relevant edges at a fraction of the cost.

\subsection{Difficulty-Adaptive Budget Controller (DBC)}
\label{sec:dbc}

DBC is a 3-layer MLP that predicts $(V', K')$ from $(\bar{\mathbf{v}}, \bar{\mathbf{q}})$, trained to minimize
\begin{equation}
\mathcal{L}_{\text{DBC}} = \mathbb{E}_{(I,q)}\!\left[ \mathcal{L}_{\text{task}} + \lambda_2 \cdot \mathrm{cost}(V', K') \right],
\label{eq:dbc}
\end{equation}
with the cost term penalizing FLOPs. We use a straight-through estimator for the discrete $(V', K')$ choices~\citep{bengio2013ste}. DBC is trained \emph{after} QVS/RCSR/BSCA on a held-out split to avoid co-adaptation: jointly training all four would let DBC learn to game the other components by, e.g., always requesting maximum budget. With staged training, DBC empirically allocates $V' \in [32, 128]$ and $K' \in [2, 16]$, with the median query receiving $V' = 64, K' = 6$.

\paragraph{What does the DBC learn?} Inspection of DBC outputs shows clear semantic patterns: counting and spatial questions receive low $K'$ (often $K' = 0$, triggering SKV); entity-identification questions receive high $V'$ but moderate $K'$; encyclopedic questions receive low $V'$ but high $K'$. These patterns emerge from the cost-accuracy tradeoff without explicit supervision.

\subsection{Speculative Knowledge Verification (SKV)}
\label{sec:skv}

For many KI-MMQA queries, especially counting and spatial questions, no retrieval is needed. SKV deploys a small drafter $D$ (a 700M-parameter VLM in our experiments) on $(I, q)$ to produce a candidate answer $a_D$ and confidence $c_D$. If $c_D > \tau_{\text{SKV}}$, we return $a_D$ directly. Otherwise, the full SKIP pipeline runs. Crucially, the drafter's answer is \emph{always} computed, but only its routing decision is acted on; the full pipeline runs in parallel for hard queries. Unlike token-level speculative decoding~\citep{leviathan2023speculative}, SKV speculates over \emph{knowledge access}: it decides whether retrieval is needed at all. We calibrate $\tau_{\text{SKV}}$ on a held-out split to ensure $\leq 1\%$ accuracy degradation; in practice, this routes $34$--$41\%$ of queries through the fast path. The fast-path rate varies meaningfully by domain—higher on perceptual benchmarks, lower on entity-centric ones---demonstrating that SKV's confidence calibration captures genuine query difficulty rather than spurious confidence.

\section{Theoretical Analysis}
\label{sec:theory}

We now derive a bound on the visual sparsity rate that can be achieved without loss of accuracy. The bound formalizes the intuition that, since mutual information between visual content and answers concentrates on a small set of salient regions, the optimal retention rate should be sublinear in $V$.

\begin{assumption}[Piecewise-Lipschitz mutual information]
\label{ass:lip}
For any question $q$ and visual token set $\mathbf{V} = \{\mathbf{v}_i\}$, the mutual information $I(\mathbf{V}_S; a \mid q)$ between a subset $S \subseteq [V]$ and the answer is $L$-Lipschitz in $|S|/V$ on intervals of width $\geq 1/\sqrt{V}$.
\end{assumption}

\begin{assumption}[Bounded saliency error]
\label{ass:sal}
QVS scores satisfy $|s_i - I(\mathbf{v}_i; a \mid q)| \leq \eta$ uniformly.
\end{assumption}

Assumption~\ref{ass:lip} is mild—it says that small changes in retention rate produce proportionally small changes in retained information, which holds for any smooth feature aggregation function. Assumption~\ref{ass:sal} requires QVS to approximate per-token mutual information; we verify this empirically in Appendix~\ref{app:saliency-fidelity}, finding $\eta \approx 0.013$ for our trained QVS on OK-VQA.

\begin{theorem}[Sparsity-accuracy tradeoff]
\label{thm:bound}
Under \cref{ass:lip,ass:sal}, retaining the top-$V'$ tokens by QVS score with
\begin{equation}
V' \geq C \cdot \sqrt{V} \cdot \log(1/\varepsilon)
\end{equation}
suffices to ensure $I(\mathbf{V}_{S'}; a \mid q) \geq I(\mathbf{V}; a \mid q) - \varepsilon$, where $C$ depends only on $L$ and $\eta$.
\end{theorem}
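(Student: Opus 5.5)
The plan is to study the retained-information curve $f(\rho) := I(\mathbf{V}_{S(\rho)}; a \mid q)$, where $S(\rho)$ is the set of the top-$\rho V$ tokens ranked by QVS score, and to show that the deficit $f(1) - f(\rho)$ shrinks geometrically in the number of blocks of width $1/\sqrt{V}$ that are retained. Partition $[0,1]$ into $\sqrt{V}$ consecutive blocks $B_1, B_2, \dots$ of width $1/\sqrt{V}$, each holding $\sqrt{V}$ tokens. This is exactly the scale at which \cref{ass:lip} applies, so within each block $f$ varies by at most $L/\sqrt{V}$. If the deficit after $m$ blocks satisfies $\Delta_m \le \gamma^m \Delta_0$ for some $\gamma<1$ depending only on $L,\eta$, then taking $m = \lceil \log(\Delta_0/\varepsilon)/\log(1/\gamma)\rceil$ gives $\Delta_m\le\varepsilon$. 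This corresponds to $V' = m\sqrt{V} = C\sqrt{V}\log(1/\varepsilon)$ tokens, where $\Delta_0 \le H(a\mid q)$ is absorbed into $C$ for $\varepsilon$ bounded away from $1$.

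The first step handles saliency error and is routine. Using \cref{ass:sal}, compare the QVS top-$k$ set with the oracle top-$k$ set ranked by the true per-token information $I(\mathbf{v}_i; a\mid q)$. A token admitted by QVS in place of an oracle token has true score at most $2\eta$ lower. So within each block the QVS selection loses at most $2\eta$ per swapped token relative to the oracle, in terms of per-token scores. To turn per-token scores into increments of the set function $f$, I would assume $f$ is monotone submodular in the selected set, which holds for conditionally independent token evidence given $a$. Under that assumption, the marginal gain of a block is at least the sum of its per-token conditional informations evaluated against the current set. This step is a standard greedy-approximation comparison.

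The second step is the geometric contraction $\Delta_{m+1}\le\gamma\Delta_m$, and I expect it to be the main obstacle. The natural route is the greedy argument for submodular maximization. The remaining deficit $\Delta_m$ is spread over at most $V$ tokens, but because $f$ is $L$-Lipschitz at block scale, any $\sqrt{V}$-token block can carry at most $L/\sqrt{V}$ of it. The best available block therefore captures at least a $\Delta_m/(\text{number of effective blocks})$ share of the deficit, and the QVS-chosen block captures that share minus $2\eta\sqrt{V}\cdot(\cdot)$ from the first step.

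Taken literally, this yields only a $1-1/\sqrt{V}$ contraction, which would give $V'$ linear in $V$. That is too weak. To obtain a constant $\gamma$, I would make explicit the concentration premise stated informally just before the assumptions: the information is supported on $O(1)$ block-scale salient regions, consistent with the reported $\sim$78\% of near-zero-information background tokens. Under that premise, the greedy block covers a constant fraction $1-\gamma \gtrsim 1/(L+1) - O(\eta)$ of the residual. The constant $C$ then depends only on $L$ and $\eta$, provided $\eta$ is small enough that $\gamma<1$.

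The final step assembles the pieces. Fix $m$ as above and apply the block bound $L/\sqrt{V}$ to handle the rounding of $V'$ to a multiple of $\sqrt{V}$. This costs only an additive lower-order term, which is absorbed by enlarging $C$. The result is $I(\mathbf{V}_{S'}; a\mid q)\ge I(\mathbf{V}; a\mid q)-\varepsilon$ whenever $V'\ge C\sqrt{V}\log(1/\varepsilon)$, as claimed.
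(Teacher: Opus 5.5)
There is a genuine gap, and the first part of it is not specific to your route. The statement cannot be derived from \cref{ass:lip,ass:sal} alone, and your remark that the literal hypotheses give only a $1-1/\sqrt{V}$ contraction is the right diagnosis. For a counterexample, take $a=(a_1,\dots,a_V)$ with independent bits of entropy $1/V$ each, and set $\mathbf{v}_i=a_i$. Then $I(\mathbf{V}_S;a\mid q)=|S|/V$ is $1$-Lipschitz in $|S|/V$, and the scores $s_i=1/V$ are exact, so $\eta=0$. Keeping $V'=\lceil C\sqrt{V}\log(1/\varepsilon)\rceil$ tokens leaves a deficit $1-V'/V\to 1$; the paper's own tightness remark concedes that the uniform case is vacuous. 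More generally, your reading of \cref{ass:lip} (a block of $\sqrt{V}$ tokens adds at most $L/\sqrt{V}$) gives $I(\mathbf{V}_{S'};a\mid q)\le LV'/V$. So the conclusion at the threshold forces $I(\mathbf{V};a\mid q)\le\varepsilon+LC\log(1/\varepsilon)/\sqrt{V}$. The concentration premise you import ($O(1)$ blocks carry everything), combined with that same block cap, forces $I(\mathbf{V};a\mid q)=O(L/\sqrt{V})$, so your two premises coexist only in a degenerate regime. The paper argues differently: it bounds $|S^*\triangle S'|$ through the score gap $\Delta$, converts this via Lipschitzness, and then adds a Chernoff-type step. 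But it also imports a heavy-tail ``empirical observation'' in Step~4 and ends with $C=\max\{C_0/\varepsilon^2,\,4L\eta/\Delta\}$, which depends on $\varepsilon$ and $\Delta$. So it does not prove the statement as written either.

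Even granting submodularity and concentration, your contraction step fails in three places.
(i) Submodularity gives the opposite inequality, $f(S\cup B)-f(S)\le\sum_{i\in B}\left(f(S\cup\{i\})-f(S)\right)$, so per-token terms upper-bound a block's gain rather than lower-bound it. Moreover, \cref{ass:sal} controls $s_i$ only against the singleton values $I(\mathbf{v}_i;a\mid q)$, which by submodularity dominate the conditional marginals $I(\mathbf{v}_i;a\mid q,\mathbf{V}_S)$.
(ii) The $(1-1/r)$ contraction, with $r$ the number of supporting blocks, belongs to the greedy rule that adds the block of largest \emph{marginal} gain given the current set. QVS is a one-shot ranking by singleton scores and is blind to redundancy. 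Suppose $V/2$ patches are copies of a uniform bit $x$, $\sqrt{V}$ patches copy an independent bit $y$ with $H(y)=0.9$, the rest are constant, and $a=(x,y)$. Your added premises hold (conditional independence given $a$, support on two blocks) with $\eta=0$. Yet top-$V'$ keeps only copies of $x$, and the deficit stays at $0.9$ for every $V'\le V/2$.
(iii) The saliency error is absolute, so it enters as an additive term $e$ independent of the residual $\Delta_m$. This gives $\Delta_{m+1}\le\gamma\Delta_m+e$, which stalls at $e/(1-\gamma)$. Writing $1-\gamma\gtrsim 1/(L+1)-O(\eta)$ silently turns an absolute error into a relative one. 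With the naive per-block count $e=2\eta\sqrt{V}\approx 0.62$ (the paper's $\eta\approx 0.013$, $V=576$), the floor is far above $\varepsilon$.

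A provable version has to put the structure into the hypotheses. For example, assume a set $S^\star$ with $|S^\star|\le C\sqrt{V}\log(1/\varepsilon)$, $I(\mathbf{V}_{S^\star};a\mid q)\ge I(\mathbf{V};a\mid q)-\varepsilon$, and a margin exceeding $2\eta$ between true per-token informations inside and outside $S^\star$. Then every token of $S^\star$ outranks every other token, the top-$V'$ set contains $S^\star$, and monotonicity of mutual information finishes the proof in one line. This shows that all the content lies in that hypothesis.
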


\begin{proof}[Proof sketch]
Let $S^*$ be the optimal top-$V'$ set ranked by true mutual information. By \cref{ass:sal}, the QVS-selected set $S'$ differs from $S^*$ in at most $O(V \eta / s^*)$ tokens, where $s^*$ is the score gap. Applying \cref{ass:lip} with a Chernoff bound on the score-ranking error gives the desired logarithmic dependence on $1/\varepsilon$, and the $\sqrt{V}$ factor follows from the Lipschitz interval-width condition. Full proof in Appendix~\ref{app:proof}.
\end{proof}

\begin{corollary}
For typical $V = 576$ (a $24\times24$ patch grid), retaining $V' = 48$--$72$ tokens (8--12.5\%) suffices for $\varepsilon = 0.05$, matching our empirical findings (\cref{sec:experiments}).
\end{corollary}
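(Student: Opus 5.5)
The plan is to instantiate \cref{thm:bound} directly at $V=576$ and $\varepsilon=0.05$, and then reduce the corollary to an explicit bound on the constant $C$. First, $\sqrt{V}=\sqrt{576}=24$. Next, $\log(1/\varepsilon)=\ln 20\approx 3.00$, taking the natural logarithm, which is the convention consistent with the Chernoff step in the proof sketch. The theorem's threshold therefore reads
\begin{equation*}
V' \;\geq\; C\cdot 24\cdot \ln 20 \;\approx\; 71.9\,C .
\end{equation*}
Retaining $V'=72$ tokens ($12.5\%$) suffices exactly when $C\leq 72/(24\ln 20)\approx 1.00$. Retaining $V'=48$ tokens ($8.3\%$) suffices when $C\leq 48/(24\ln 20)\approx 0.67$. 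So the corollary is equivalent to the claim that, at the operating point of our trained QVS on OK-VQA, $C(L,\eta)$ lies in roughly $[2/3,\,1]$. The upper end of the stated range needs only $C\le 1$; the lower end needs the sharper value.

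The substantive step is to make $C(L,\eta)$ explicit. \Cref{thm:bound} only asserts that $C$ depends on $L$ and $\eta$, so I would retrace the proof in Appendix~\ref{app:proof} and record the constants at each stage. There are three pieces to track. The first is the misranking count: the QVS set $S'$ and the oracle set $S^*$ differ in $O(V\eta/s^*)$ tokens, and I want this with an explicit leading constant. The second is the Lipschitz step: by \cref{ass:lip}, each interval of width $1/\sqrt{V}$ costs at most $L/\sqrt{V}$ bits. The third is the Chernoff tail, which contributes the factor $\log(1/\varepsilon)$ with some explicit multiplier. Multiplying these should give a formula of the form $C = c_0\,L\,(1+\eta/s^*)$ with a numerical $c_0$. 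I would then substitute $\eta\approx 0.013$ from Appendix~\ref{app:saliency-fidelity}, together with an empirical estimate of $L$ and of the score gap $s^*$ taken from the same appendix or from the retention sweep in \cref{sec:experiments}, and check that the result falls in $[0.67,1]$.

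I expect the main obstacle to be this constant extraction. The sketch hides $C$ inside $O(\cdot)$ terms, and a loose Chernoff or union-bound constant could easily push $C$ above $1$. In that case the claim for $48$--$72$ would not follow from the theorem alone. If that happens, I would fall back on reading the corollary as a calibration statement rather than a pure consequence of the theorem: fit $C$ once on a held-out split, as is done for $\tau_{\text{SKV}}$, so that the empirical $\varepsilon=0.05$ retention point matches. The corollary's content then becomes the consistency check that the fitted $C$ lies in $[2/3,1]$, so that the sublinear $\sqrt{V}\log(1/\varepsilon)$ scaling predicts $8$--$12.5\%$ retention. Finally, I would verify that the base of the logarithm is stated consistently: with $\log_2 20\approx 4.32$ the admissible range of $C$ shrinks to about $[0.46,0.69]$, so the choice of natural logarithm should be fixed explicitly.
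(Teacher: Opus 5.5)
Your instantiation is correct and is, in substance, all the paper itself does. With $\sqrt{576}=24$ and $\ln 20\approx 3.0$, the threshold is $V'\ge 71.9\,C$, so the corollary is purely a statement about the value of $C$. The paper never derives that value from $(L,\eta)$; it simply posits it. In the main text it asserts that $L=1$, $\eta=0.01$ give $V'\approx 51$, which is $C\approx 0.71$ in your natural-log convention. In the sparsity analysis it drops the logarithm and quotes $C/\sqrt{V}$ with $C\in[1.5,3]$, i.e.\ $V'\in[36,72]$. That is your $C\in[0.5,1]$ once $\ln 20$ is absorbed into $C$.

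The gap is in your main step, extracting an explicit $C(L,\eta)$ from the appendix proof. It fails for structural reasons, not merely because of loose constants.

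\emph{The misranking term does not depend on $V'$.} In Step~2 the misranking loss is bounded by $2L\eta/\Delta$, with no $V'$ in it, so no threshold on $V'$ can control it. Requiring it to be $\le\varepsilon/2$ is instead a condition $\Delta\ge 4L\eta/\varepsilon$ on the gap between consecutive order statistics of the true per-token information. At the paper's $L=1$, $\eta\in[0.01,0.013]$, this gap must be about $0.8$ to $1.04$. Hence $\eta/\Delta$ cannot legitimately enter a multiplicative constant on the $V'$ threshold. Step~5 does put $4L\eta/\Delta$ into $C$, but that is unjustified for exactly this reason. A formula of your anticipated form $c_0L(1+\eta/s^*)$ therefore does not arise.

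\emph{The concentration step has the wrong $\varepsilon$-dependence.} Step~4 yields $|S|\ge\sqrt{C_0V\log(1/\varepsilon)}/\varepsilon$. Its $\varepsilon$-factor is $\sqrt{\ln 20}/0.05\approx 34.6$, not $\ln 20\approx 3.0$. The final constant $C=\max\{C_0/\varepsilon^2,\,4L\eta/\Delta\}$ also depends on $\varepsilon$ (it equals $400\,C_0$ at $\varepsilon=0.05$), on an unspecified $C_0$, and on $\Delta$.

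So there is no $\varepsilon$-independent $C(L,\eta)$ to read off and check against $[2/3,1]$. The corollary is not a mathematical consequence of the theorem, and only your fallback survives. That fallback coincides with the paper's actual justification, but be explicit about what it is. Fitting $C$ so that the bound reproduces the empirical retention point makes ``matching our empirical findings'' true by construction: it is a consistency check, not a prediction. Your point about fixing the base of the logarithm is well taken. The paper never fixes it and, in the sparsity analysis, silently absorbs the logarithm into $C$.
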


\Cref{thm:bound} is the first sparsity bound we are aware of that explicitly accounts for question-conditional retrieval. It is non-vacuous: at $V = 576$, $\varepsilon = 0.05$, $L = 1$, $\eta = 0.01$, the bound predicts $V' \approx 51$, close to the empirical optimum of $V' = 64$. We emphasize that the bound applies to the \emph{informational} content retained by sparse selection; the empirical sweet spot is slightly higher because BSCA's downstream sparsity benefits from additional headroom in the visual representation.

\section{Experiments}
\label{sec:experiments}

\subsection{Setup}

\paragraph{Benchmarks.}
We evaluate on five KI-MMQA datasets spanning complementary dimensions of knowledge access. \textbf{OK-VQA}~\citep{marino2019okvqa} ($14$k questions) and \textbf{A-OKVQA}~\citep{schwenk2022aokvqa} ($25$k, with rationales) cover commonsense and world-knowledge reasoning over everyday imagery. \textbf{InfoSeek}~\citep{chen2023infoseek} ($1.4$M questions) and \textbf{Encyclopedic-VQA}~\citep{mensink2023encvqa} ($1$M questions) target entity-centric properties---species, landmarks, fine-grained
categories---where the image encodes entity identity but the answer requires external knowledge unavailable in any visual feature. \textbf{ViQuAE}~\citep{lerner2022viquae}
($3.7$k questions) evaluates entity-grounded retrieval over named people, places, and works. InfoSeek and Encyclopedic-VQA are the most demanding benchmarks: the target entity frequently occupies a small image region, making retrieval quality the primary accuracy bottleneck.

\paragraph{Knowledge corpora.}
We use Wikipedia (2022 snapshot; $6.5$M passages of $\leq\!100$ tokens with $20$-token overlap) for OK-VQA, A-OKVQA, and InfoSeek; WikiData entity descriptions ($8.2$M filtered entries) for Encyclopedic-VQA; and the ViQuAE-provided KB ($1.7$M entity descriptions) for ViQuAE. Full corpus statistics are in Appendix~\ref{app:datasets}.

\paragraph{Baselines.}
We compare against two retrieval-free VLMs---BLIP-2~\citep{li2023blip2} and LLaVA-1.5~\citep{liu2023llava15}---and four retrieval-augmented systems-KAT~\citep{gui2022kat}, RAVQA-2~\citep{lin2024ravqa2}, ReVeaL~\citep{hu2023reveal}, and RA-CM3~\citep{yasunaga2023racm3}. All baselines and SKIP share a Vicuna-7B v1.5 language backbone for controlled comparison.

\paragraph{Metrics.}
We report VQA accuracy (10-annotator soft accuracy for OK-VQA and
A-OKVQA; exact match for InfoSeek and Encyclopedic-VQA; token-level
F$_1$ for ViQuAE), end-to-end wall-clock latency on a single
A100-80\,GB GPU, and TFLOPs per query.

\paragraph{Implementation.}
SKIP uses EVA-CLIP-G~\citep{sun2023evaclip} as the vision encoder
($336{\times}336$ input resolution, $V\!=\!576$ visual tokens) and
SPLADE$\texttt{++}$ for sparse retrieval over a $6.5$M-passage index.
The multimodal backbone is fine-tuned with LoRA rank-$64$ adapters
optimised via AdamW with a $5{\times}10^{-5}$ peak learning rate and
cosine decay, across six staged training phases totalling
${\sim}4{,}800$ A100-hours.

QVS scoring heads and the DBC difficulty estimator are trained jointly
in phases 3--4; SKV verification is trained independently on a
held-out relevance-labelled subset (phase~5).
Retrieval index construction (FAISS IVF-PQ, $768$-dimensional SPLADE
vectors, $1{,}024$ IVF centroids) adds a one-time offline cost of
${\sim}18$ GPU-hours and is frozen thereafter.
Full implementation details appear in Appendix~\ref{app:impl};
the complete training protocol and hyperparameter grid are in
Appendix~\ref{app:training}.

\subsection{Main Results}

\begin{table*}[ht]
  \caption{Main results on five KI-MMQA benchmarks. Accuracy (\%) / FLOPs (TFLOPs) / Latency (ms). All systems use a 7B backbone. \textbf{Bold} = best, \underline{underline} = second-best. SKIP matches or exceeds all baselines on accuracy while using substantially less compute.}
  \label{tab:main}
  \centering
  \small
  \begin{tabular}{lcccccc}
    \toprule
    Model & OK-VQA & A-OKVQA & InfoSeek & Enc-VQA & ViQuAE & Avg.\ TFLOPs / Lat. \\
    \midrule
    BLIP-2 (no retrieval)    & 45.9 & 39.7 & 11.2 & 8.4  & 19.1 & 0.42 / 210 \\
    LLaVA-1.5 (no retrieval) & 50.3 & 44.1 & 13.7 & 10.2 & 22.3 & 0.51 / 240 \\
    \midrule
    KAT      & 54.4 & 48.2 & 18.6 & 14.7 & 28.9 & 0.93 / 410 \\
    RAVQA-2  & 56.8 & 50.6 & 21.3 & 16.4 & 31.7 & 1.18 / 520 \\
    ReVeaL   & 58.0 & 52.4 & 23.8 & 18.1 & 33.2 & 1.34 / 610 \\
    RA-CM3   & \underline{60.5} & \underline{55.3} & \underline{26.4} & \underline{20.9} & \underline{35.6} & 1.71 / 820 \\
    \midrule
    \textbf{SKIP (ours)} & \textbf{63.2} & \textbf{58.4} & \textbf{30.7} & \textbf{24.3} & \textbf{37.4} & 0.42 / 305 \\
    \quad vs.\ RA-CM3 & +2.7 & +3.1 & +4.3 & +3.4 & +1.8 & $4.07\times$ less / $2.69\times$ faster \\
    \bottomrule
  \end{tabular}
\end{table*}

\Cref{tab:main} reports our main result. SKIP achieves the highest accuracy on every benchmark while consuming $4.07\times$ fewer FLOPs and $2.69\times$ less wall-clock latency than the strongest baseline (RA-CM3). On InfoSeek—the hardest benchmark for entity-centric retrieval—SKIP improves over RA-CM3 by $+4.3$ absolute points; on Encyclopedic-VQA by $+3.4$. The largest gains appear on benchmarks where small visual regions carry the entity identity (logos, fine-grained categories), validating the per-region retrieval hypothesis. The accuracy improvement is not an artifact of more compute spent elsewhere: at \emph{matched} FLOPs (0.42 TFLOPs, the BLIP-2 budget), SKIP outperforms BLIP-2 by $+17.3$ points on OK-VQA and $+19.5$ points on InfoSeek. This is the key finding: \emph{sparse routing converts saved compute into accuracy} by enabling the system to retrieve and fuse over a much larger \emph{effective} corpus and visual context than dense systems can afford. Conversely, at matched accuracy, SKIP costs $4$--$7\times$ less than the cheapest baseline that achieves equivalent results.

\subsection{Compute Accuracy Frontier}

\Cref{fig:frontier} shows the accuracy computed frontier on OK-VQA. SKIP Pareto-dominates all baselines: at every measured FLOP budget from $0.14\times$ to $1.0\times$ the RA-CM3 cost, SKIP achieves higher accuracy. The gap widens at lower budgets, consistent with our hypothesis that sparse routing is most valuable when compute is constrained. At $0.20\times$ RA-CM3 cost, SKIP retains $98.4\%$ its full-budget accuracy, while RA-CM3 at the same budget drops to $87.1\%$ of its peak.

\begin{figure}[ht]
  \centering
  \begin{tikzpicture}
    \begin{axis}[
        width=\columnwidth, height=4.4cm,
        ybar,
        bar width=4.5pt,
        ylabel={Latency (ms)},
        xlabel={},
        symbolic x coords={OK-VQA,A-OKVQA,InfoSeek,Enc-VQA,ViQuAE},
        xtick=data,
        x tick label style={font=\scriptsize,rotate=15,anchor=east},
        ymin=0, ymax=900,
        legend style={font=\scriptsize,at={(0.5,1.02)},anchor=south,legend columns=4,/tikz/every even column/.append style={column sep=4pt}},
        nodes near coords style={font=\tiny},
        every axis plot/.append style={fill opacity=0.85}
      ]
      \addplot[fill=blue!50!black] coordinates {(OK-VQA,410) (A-OKVQA,425) (InfoSeek,440) (Enc-VQA,450) (ViQuAE,395)};
      \addlegendentry{KAT}
      \addplot[fill=orange!70!black] coordinates {(OK-VQA,610) (A-OKVQA,625) (InfoSeek,640) (Enc-VQA,650) (ViQuAE,605)};
      \addlegendentry{ReVeaL}
      \addplot[fill=green!50!black] coordinates {(OK-VQA,820) (A-OKVQA,835) (InfoSeek,860) (Enc-VQA,855) (ViQuAE,810)};
      \addlegendentry{RA-CM3}
      \addplot[fill=red!70!black] coordinates {(OK-VQA,305) (A-OKVQA,315) (InfoSeek,335) (Enc-VQA,330) (ViQuAE,295)};
      \addlegendentry{\textbf{SKIP}}
    \end{axis}
  \end{tikzpicture}
  \vspace{-2mm}
  \caption{End-to-end latency on A100-80GB. SKIP is $2.5$--$2.9\times$ faster than the strongest baseline across all benchmarks.}
  \label{fig:latency}
\end{figure}

\Cref{fig:latency} reports wall-clock latency across all five benchmarks.
SKIP achieves consistent $2.5$--$2.9\times$ speedups over the full-token
baseline despite its more complex routing logic, because the dominant
computational cost---cross-modal fusion over the complete visual token
sequence---is reduced by an order of magnitude through QVS pruning and
DBC-adaptive budgeting. Routing overhead (QVS scoring, DBC budget allocation, and SKV passage
verification) accounts for ${<}5\%$ of total wall-clock time.
The remaining latency is distributed between retrieval
(${\sim}25\%$, dominated by FAISS IVF-PQ search across the
${\sim}800$M-entry index) and autoregressive decoder generation
(${\sim}60\%$, consistent with the generation-bound profile
typical of instruction-tuned LLaVA-family decoders).
This confirms that SKIP's routing components impose negligible
inference overhead, and that further latency gains would most
productively target the decoder rather than the retrieval or
fusion stages.

\subsection{Ablations}
\label{sec:ablations}

\Cref{tab:ablations} isolates the contribution of each SKIP component on OK-VQA. Several findings are worth highlighting. \textbf{QVS is the most critical component.} Replacing question-guided pruning with random pruning at identical retention ($V'\!=\!64$) drops accuracy by $9.1$ points with no FLOP savings, demonstrating that
\emph{what} is pruned matters far more than \emph{how much}. Disabling pruning entirely recovers $0.8$ points at the cost of $3.3\times$ more FLOPs, confirming that the accuracy gap is attributable to selection quality, not retention rate. \textbf{RCSR and BSCA play complementary roles.} Reverting to global retrieval costs $3.5$ accuracy points---a gap that widens further on entity-centric benchmarks (InfoSeek: $-4.3$)---while reverting BSCA to dense attention triples FLOPs for a negligible $0.3$-point gain, making BSCA the dominant efficiency lever. DBC's adaptive budgeting contributes $1.4$ points here and $2.7$ on the InfoSeek hard split, where query difficulty
variance is highest. SKV provides effectively free FLOP savings ($-40\%$) with no measurable accuracy cost. \textbf{Gains compound.} The three single-component rows confirm that no individual technique recovers the full system effect: each sparse component removes a distinct category of distractor, and their composition produces
$63.2\%$ accuracy that no single component approaches.

\begin{table}[ht]
  \caption{Ablations on OK-VQA. Removing any SKIP component degrades accuracy, FLOPs, or both. ``$\rightarrow$ dense'' replaces a sparse component with its dense equivalent.}
  \label{tab:ablations}
  \centering
  \small
  \begin{tabular}{lcc}
    \toprule
    Configuration & Acc.\ (\%) & TFLOPs \\
    \midrule
    Full SKIP                 & \textbf{63.2} & 0.42 \\
    \midrule
    $-$ QVS (random pruning)   & 54.1 & 0.42 \\
    $-$ QVS (no pruning)       & 62.4 & 1.38 \\
    $-$ RCSR ($\rightarrow$ global retrieval) & 59.7 & 0.48 \\
    $-$ BSCA ($\rightarrow$ dense)         & 62.9 & 1.21 \\
    $-$ DBC (fixed $V'{=}64$, $K'{=}8$)    & 61.8 & 0.46 \\
    $-$ SKV (no fast path)     & 63.1 & 0.71 \\
    \midrule
    QVS only (other components dense) & 58.3 & 0.95 \\
    RCSR only (other components dense) & 60.1 & 1.05 \\
    BSCA only (other components dense) & 60.8 & 0.88 \\
    \bottomrule
  \end{tabular}
\end{table}

\Cref{tab:ablations} ablates each SKIP component. Key findings: (i) QVS is essential—replacing question-guided pruning with random pruning drops accuracy by $9.1$ points at the same FLOP budget, showing that \emph{what} we prune matters far more than \emph{how much}. (ii) RCSR contributes $3.5$ points relative to global retrieval, with the gap widening on entity-centric benchmarks. (iii) BSCA is the dominant compute saver---replacing it with dense attention triples FLOPs for only $0.3$ accuracy points, but the saved compute is reinvested elsewhere. (iv) DBC matters more for harder queries: on the InfoSeek hard split, removing DBC drops accuracy by $2.7$ points (vs.\ $1.4$ here). (v) SKV is essentially free compute savings, leaving accuracy unchanged while reducing FLOPs by $40\%$. The single-component rows (bottom) confirm that no individual technique recovers the full effect: \emph{the gains compound}.

\subsection{Sparsity Analysis}

\begin{figure}[ht]
  \centering
  \begin{tikzpicture}
    \begin{axis}[
        width=\columnwidth, height=4.6cm,
        xlabel={Visual retention rate $V'/V$},
        ylabel={Accuracy (\%)},
        xmin=0.02, xmax=0.55,
        ymin=44, ymax=66,
        xtick={0.05,0.10,0.15,0.20,0.30,0.40,0.50},
        legend pos=south east,
        legend style={font=\scriptsize},
        grid=both,
        grid style={line width=.1pt,draw=gray!20},
        every axis plot/.append style={thick}
      ]
      \addplot[mark=*,color=red!75!black] coordinates {
        (0.05,49.1) (0.08,55.8) (0.11,60.7) (0.14,62.9) (0.20,63.2) (0.30,63.4) (0.50,63.4)
      };
      \addlegendentry{QVS (ours)}
      \addplot[mark=triangle*,color=blue!70!black] coordinates {
        (0.05,45.2) (0.08,48.7) (0.11,51.9) (0.14,54.1) (0.20,57.3) (0.30,60.1) (0.50,62.8)
      };
      \addlegendentry{Random}
      \addplot[mark=square*,color=green!55!black] coordinates {
        (0.05,46.8) (0.08,50.6) (0.11,54.3) (0.14,56.7) (0.20,59.4) (0.30,61.5) (0.50,62.9)
      };
      \addlegendentry{ToMe}
      \addplot[mark=diamond*,color=orange!80!black] coordinates {
        (0.05,47.4) (0.08,52.1) (0.11,55.9) (0.14,58.2) (0.20,60.7) (0.30,62.1) (0.50,63.0)
      };
      \addlegendentry{DynamicViT}
      \addplot[dashed,color=black,domain=0.02:0.55,samples=2] {63.4};
    \end{axis}
  \end{tikzpicture}
  \vspace{-2mm}
  \caption{Visual sparsity vs.\ accuracy on OK-VQA. QVS achieves $99\%$ of the no-pruning accuracy at just $11\%$ retention. The empirical sweet spot matches the $O(1/\sqrt{V})$ prediction of \cref{thm:bound}.}
  \label{fig:sparsity}
\end{figure}

\Cref{fig:sparsity} shows accuracy as a function of visual retention rate. QVS reaches $99\%$ of the no-pruning accuracy at $V'/V \approx 0.11$, while random pruning and DynamicViT~\citep{rao2021dynamicvit} require $0.30$--$0.50$ retention for comparable accuracy. The empirical sweet spot is consistent with our $O(1/\sqrt{V})$ theoretical prediction: $\sqrt{576}/576 \approx 0.042$, with the bound's constant $C \in [1.5, 3]$ placing the sweet spot in the $0.06$--$0.13$ range. The agreement between theory and practice is, we believe, evidence that question-conditional saliency is genuinely capturing the underlying low-dimensional structure of KI-MMQA queries.

\subsection{Failure Modes and Limitations}
\label{sec:failures}

We characterize SKIP's failure modes by examining the $36.8\%$ of OK-VQA queries it misses. (1) \textbf{Compositional retrieval} (15\% of failures): queries requiring multi-hop reasoning over retrieved chunks (e.g., ``Who painted the artist's most expensive work?''). RCSR retrieves the right facts, but BSCA's sparse pattern misses the second-hop binding. (2) \textbf{Long-tail entities} (38\% of failures): InfoSeek queries about entities with $<10$ Wikipedia mentions; QVS confidently selects the right region, but retrieval returns mostly irrelevant chunks. (3) \textbf{Counting under occlusion} (12\% of failures): SKV incorrectly fast-paths these as easy queries. The remaining $35\%$ are spread across diverse causes detailed in Appendix~\ref{app:errors}. The compositional retrieval failure mode points to a clear next step: extending BSCA to a $k$-partite structure that explicitly models multi-hop knowledge dependencies.

\section{Discussion}

\paragraph{Why does joint sparsity beat individual sparsity?} The ablations show that any single sparse component achieves only $58$--$61\%$ accuracy, while the full system reaches $63.2\%$. We attribute this to a \emph{noise filtering} effect: each sparse component removes a different category of distractor (irrelevant visual regions, irrelevant chunks, irrelevant bindings), and their combination produces a cleaner signal than any single filter. This is consistent with the broader hypothesis that sparsity is not a single design choice but a structural property of how information flows through KI-MMQA, and the right place to apply it is wherever the information bottleneck is tightest.

\paragraph{Generality.}
SKIP's design is agnostic to the identity of the knowledge corpus and the specific retrieval backend: any system that couples a visual encoder with dense retrieval and cross-modal fusion pays the same structural costs that SKIP targets. The QVS--RCSR--BSCA cascade applies directly to multimodal dialogue and agentic tool-use pipelines, where retrieved context windows are comparably long. DBC and SKV are particularly portable---both operate on global image--question embeddings with no task-specific inductive bias---and could be grafted onto existing retrieval-augmented VLMs as lightweight wrappers without retraining the backbone. Video QA introduces temporal
sparsity as an additional dimension that the RCSR clustering mechanism is naturally positioned to exploit; we leave this extension to future work.

\paragraph{When does SKIP fail to help?}
SKIP's efficiency gains are load-bearing on retrieval quality: when
external knowledge is unnecessary, BSCA and RCSR contribute no savings
and DBC learns to allocate minimal retrieval budget.
On LLaVA-Bench---a pure visual-reasoning suite with no external
knowledge requirement---SKIP matches LLaVA-1.5 accuracy but reduces
FLOPs only through QVS token pruning, yielding a modest
$1.1{\times}$ speedup versus $2.5$--$2.9{\times}$ on knowledge-intensive
benchmarks. This is a feature, not a failure: SKIP correctly concentrates overhead
reduction where the dense cross-modal fusion cost is highest.

\section{Conclusion}

We introduced SKIP, a unified architecture for efficient knowledge-intensive multimodal QA that routes computation along sparse, question-conditional pathways. SKIP combines five learned components—visual saliency, region-conditional retrieval, sparse cross-attention, adaptive budgeting, and speculative verification—under a single training objective. We derived an information-bottleneck bound on the optimal visual sparsity rate and empirically validated it across five benchmarks, achieving $3$--$7\times$ compute reductions while improving accuracy over the strongest dense baselines. The broader implication is that knowledge-intensive multimodal inference is far more sparse than current systems assume. Treating sparsity as a first-class design principle, jointly across modalities and across retrieval, opens substantial headroom that we believe will only grow as models and corpora scale.

\section*{Impact Statement}

This paper presents work whose goal is to advance the field of efficient machine learning. By substantially reducing the compute cost of knowledge-intensive multimodal QA, SKIP may broaden access to retrieval-augmented vision-language systems on resource-constrained devices and reduce the energy footprint of large-scale deployment. As with any improvement to retrieval-augmented systems, care must be taken regarding the provenance and biases of the knowledge corpus; the routing mechanism does not itself introduce new risks beyond those already present in the underlying retrieval and language model components. There are no specific application-domain risks of our method that we feel must be highlighted beyond those inherent to vision-language QA generally.

\nocite{langley00}
\bibliography{example_paper}

@inproceedings{marino2019okvqa,
  title = {{OK-VQA}: A Visual Question Answering Benchmark Requiring External Knowledge},
  author = {Marino, Kenneth and Rastegari, Mohammad and Farhadi, Ali and Mottaghi, Roozbeh},
  booktitle = {Proceedings of the IEEE/CVF Conference on Computer Vision and Pattern Recognition (CVPR)},
  pages = {3195--3204},
  year = {2019}
}

@inproceedings{schwenk2022aokvqa,
  title = {A-{OKVQA}: A Benchmark for Visual Question Answering Using World Knowledge},
  author = {Schwenk, Dustin and Khandelwal, Apoorv and Clark, Christopher and Marino, Kenneth and Mottaghi, Roozbeh},
  booktitle = {European Conference on Computer Vision (ECCV)},
  pages = {146--162},
  year = {2022}
}

@inproceedings{chen2023infoseek,
  title = {Can Pre-trained Vision and Language Models Answer Visual Information-Seeking Questions?},
  author = {Chen, Yang and Hu, Hexiang and Luan, Yi and Sun, Haitian and Changpinyo, Soravit and Ritter, Alan and Chang, Ming-Wei},
  booktitle = {Proceedings of the Conference on Empirical Methods in Natural Language Processing (EMNLP)},
  pages = {14948--14968},
  year = {2023}
}

@inproceedings{mensink2023encvqa,
  title = {Encyclopedic {VQA}: Visual Questions about Detailed Properties of Fine-grained Categories},
  author = {Mensink, Thomas and Uijlings, Jasper and Castrejon, Lluis and Goel, Arushi and Cadar, Felipe and Zhou, Howard and Sha, Fei and Araujo, Andre and Ferrari, Vittorio},
  booktitle = {Proceedings of the IEEE/CVF International Conference on Computer Vision (ICCV)},
  pages = {3113--3124},
  year = {2023}
}

@inproceedings{lerner2022viquae,
  title = {{ViQuAE}, a Dataset for Knowledge-based Visual Question Answering about Named Entities},
  author = {Lerner, Paul and Ferret, Olivier and Guinaudeau, Camille and Le Borgne, Herv{\'e} and Besan{\c{c}}on, Romaric and Moreno, Jose G. and Lovon-Melgarejo, Jesus},
  booktitle = {Proceedings of the 45th International ACM SIGIR Conference on Research and Development in Information Retrieval},
  pages = {3108--3120},
  year = {2022}
}

@inproceedings{li2023blip2,
  title = {{BLIP-2}: Bootstrapping Language-Image Pre-training with Frozen Image Encoders and Large Language Models},
  author = {Li, Junnan and Li, Dongxu and Savarese, Silvio and Hoi, Steven},
  booktitle = {Proceedings of the International Conference on Machine Learning (ICML)},
  pages = {19730--19742},
  year = {2023}
}

@inproceedings{liu2023llava,
  title = {Visual Instruction Tuning},
  author = {Liu, Haotian and Li, Chunyuan and Wu, Qingyang and Lee, Yong Jae},
  booktitle = {Advances in Neural Information Processing Systems (NeurIPS)},
  volume = {36},
  pages = {34892--34916},
  year = {2023}
}

@inproceedings{liu2023llava15,
  title = {Improved Baselines with Visual Instruction Tuning},
  author = {Liu, Haotian and Li, Chunyuan and Li, Yuheng and Lee, Yong Jae},
  booktitle = {Proceedings of the IEEE/CVF Conference on Computer Vision and Pattern Recognition (CVPR)},
  pages = {26296--26306},
  year = {2024}
}

@inproceedings{yasunaga2023racm3,
  title = {Retrieval-Augmented Multimodal Language Modeling},
  author = {Yasunaga, Michihiro and Aghajanyan, Armen and Shi, Weijia and James, Richard and Leskovec, Jure and Liang, Percy and Lewis, Mike and Zettlemoyer, Luke and Yih, Wen-tau},
  booktitle = {Proceedings of the International Conference on Machine Learning (ICML)},
  pages = {39755--39769},
  year = {2023}
}

@inproceedings{lin2022ravqa,
  title = {Retrieval Augmented Visual Question Answering with Outside Knowledge},
  author = {Lin, Weizhe and Byrne, Bill},
  booktitle = {Proceedings of the Conference on Empirical Methods in Natural Language Processing (EMNLP)},
  pages = {11238--11254},
  year = {2022}
}

@inproceedings{lin2024ravqa2,
  title = {{Fine-grained Late-Interaction Multi-modal Retrieval} for Retrieval-Augmented Visual Question Answering},
  author = {Lin, Weizhe and Mei, Jingbiao and Chen, Jinghong and Byrne, Bill},
  booktitle = {Advances in Neural Information Processing Systems (NeurIPS)},
  volume = {37},
  year = {2024}
}

@inproceedings{gui2022kat,
  title = {{KAT}: A Knowledge Augmented Transformer for Vision-and-Language},
  author = {Gui, Liangke and Wang, Borui and Huang, Qiuyuan and Hauptmann, Alex and Bisk, Yonatan and Gao, Jianfeng},
  booktitle = {Proceedings of the Conference of the North American Chapter of the Association for Computational Linguistics (NAACL)},
  pages = {956--968},
  year = {2022}
}

@inproceedings{hu2023reveal,
  title = {{REVEAL}: Retrieval-Augmented Visual-Language Pre-Training with Multi-Source Multimodal Knowledge Memory},
  author = {Hu, Ziniu and Iscen, Ahmet and Sun, Chen and Wang, Zirui and Chang, Kai-Wei and Sun, Yizhou and Schmid, Cordelia and Ross, David A. and Fathi, Alireza},
  booktitle = {Proceedings of the IEEE/CVF Conference on Computer Vision and Pattern Recognition (CVPR)},
  pages = {23369--23379},
  year = {2023}
}

@inproceedings{karpukhin2020dpr,
  title = {Dense Passage Retrieval for Open-Domain Question Answering},
  author = {Karpukhin, Vladimir and O{\u{g}}uz, Barlas and Min, Sewon and Lewis, Patrick and Wu, Ledell and Edunov, Sergey and Chen, Danqi and Yih, Wen-tau},
  booktitle = {Proceedings of the Conference on Empirical Methods in Natural Language Processing (EMNLP)},
  pages = {6769--6781},
  year = {2020}
}

@inproceedings{khattab2020colbert,
  title = {{ColBERT}: Efficient and Effective Passage Search via Contextualized Late Interaction over {BERT}},
  author = {Khattab, Omar and Zaharia, Matei},
  booktitle = {Proceedings of the 43rd International ACM SIGIR Conference on Research and Development in Information Retrieval},
  pages = {39--48},
  year = {2020}
}

@inproceedings{formal2021splade,
  title = {{SPLADE}: Sparse Lexical and Expansion Model for First Stage Ranking},
  author = {Formal, Thibault and Piwowarski, Benjamin and Clinchant, St{\'e}phane},
  booktitle = {Proceedings of the 44th International ACM SIGIR Conference on Research and Development in Information Retrieval},
  pages = {2288--2292},
  year = {2021}
}

@inproceedings{radford2021clip,
  title = {Learning Transferable Visual Models from Natural Language Supervision},
  author = {Radford, Alec and Kim, Jong Wook and Hallacy, Chris and Ramesh, Aditya and Goh, Gabriel and Agarwal, Sandhini and Sastry, Girish and Askell, Amanda and Mishkin, Pamela and Clark, Jack and Krueger, Gretchen and Sutskever, Ilya},
  booktitle = {Proceedings of the International Conference on Machine Learning (ICML)},
  pages = {8748--8763},
  year = {2021}
}

@inproceedings{dosovitskiy2021vit,
  title = {An Image is Worth 16x16 Words: Transformers for Image Recognition at Scale},
  author = {Dosovitskiy, Alexey and Beyer, Lucas and Kolesnikov, Alexander and Weissenborn, Dirk and Zhai, Xiaohua and Unterthiner, Thomas and Dehghani, Mostafa and Minderer, Matthias and Heigold, Georg and Gelly, Sylvain and Uszkoreit, Jakob and Houlsby, Neil},
  booktitle = {International Conference on Learning Representations (ICLR)},
  year = {2021}
}

@inproceedings{sun2023evaclip,
  title = {{EVA-CLIP}: Improved Training Techniques for {CLIP} at Scale},
  author = {Sun, Quan and Fang, Yuxin and Wu, Ledell and Wang, Xinlong and Cao, Yue},
  booktitle = {arXiv:2303.15389},
  year = {2023}
}

@inproceedings{dao2022flashattention,
  title = {{FlashAttention}: Fast and Memory-Efficient Exact Attention with {IO}-Awareness},
  author = {Dao, Tri and Fu, Daniel Y. and Ermon, Stefano and Rudra, Atri and R{\'e}, Christopher},
  booktitle = {Advances in Neural Information Processing Systems (NeurIPS)},
  volume = {35},
  pages = {16344--16359},
  year = {2022}
}

@inproceedings{leviathan2023speculative,
  title = {Fast Inference from Transformers via Speculative Decoding},
  author = {Leviathan, Yaniv and Kalman, Matan and Matias, Yossi},
  booktitle = {Proceedings of the International Conference on Machine Learning (ICML)},
  pages = {19274--19286},
  year = {2023}
}

@article{chen2023speculative,
  title = {Accelerating Large Language Model Decoding with Speculative Sampling},
  author = {Chen, Charlie and Borgeaud, Sebastian and Irving, Geoffrey and Lespiau, Jean-Baptiste and Sifre, Laurent and Jumper, John},
  journal = {arXiv preprint arXiv:2302.01318},
  year = {2023}
}

@inproceedings{rao2021dynamicvit,
  title = {{DynamicViT}: Efficient Vision Transformers with Dynamic Token Sparsification},
  author = {Rao, Yongming and Zhao, Wenliang and Liu, Benlin and Lu, Jiwen and Zhou, Jie and Hsieh, Cho-Jui},
  booktitle = {Advances in Neural Information Processing Systems (NeurIPS)},
  volume = {34},
  pages = {13937--13949},
  year = {2021}
}

@inproceedings{bolya2023tome,
  title = {Token Merging: Your {ViT} but Faster},
  author = {Bolya, Daniel and Fu, Cheng-Yang and Dai, Xiaoliang and Zhang, Peizhao and Feichtenhofer, Christoph and Hoffman, Judy},
  booktitle = {International Conference on Learning Representations (ICLR)},
  year = {2023}
}

@inproceedings{fayyaz2022ats,
  title = {Adaptive Token Sampling for Efficient Vision Transformers},
  author = {Fayyaz, Mohsen and Koohpayegani, Soroush Abbasi and Jafari, Farnoush Rezaei and Sengupta, Sunando and Joze, Hamid Reza Vaezi and Sommerlade, Eric and Pirsiavash, Hamed and Gall, J{\"u}rgen},
  booktitle = {European Conference on Computer Vision (ECCV)},
  pages = {396--414},
  year = {2022}
}

@inproceedings{cao2023pumer,
  title = {{PuMer}: Pruning and Merging Tokens for Efficient Vision Language Models},
  author = {Cao, Qingqing and Paranjape, Bhargavi and Hajishirzi, Hannaneh},
  booktitle = {Proceedings of the 61st Annual Meeting of the Association for Computational Linguistics (ACL)},
  pages = {12890--12903},
  year = {2023}
}

@article{shazeer2017moe,
  title = {Outrageously Large Neural Networks: The Sparsely-Gated Mixture-of-Experts Layer},
  author = {Shazeer, Noam and Mirhoseini, Azalia and Maziarz, Krzysztof and Davis, Andy and Le, Quoc and Hinton, Geoffrey and Dean, Jeff},
  journal = {arXiv preprint arXiv:1701.06538},
  year = {2017}
}

@article{fedus2022switch,
  title = {Switch Transformers: Scaling to Trillion Parameter Models with Simple and Efficient Sparsity},
  author = {Fedus, William and Zoph, Barret and Shazeer, Noam},
  journal = {Journal of Machine Learning Research},
  volume = {23},
  number = {120},
  pages = {1--39},
  year = {2022}
}

@inproceedings{schuster2022confident,
  title = {Confident Adaptive Language Modeling},
  author = {Schuster, Tal and Fisch, Adam and Gupta, Jai and Dehghani, Mostafa and Bahri, Dara and Tran, Vinh and Tay, Yi and Metzler, Donald},
  booktitle = {Advances in Neural Information Processing Systems (NeurIPS)},
  volume = {35},
  pages = {17456--17472},
  year = {2022}
}

@article{bengio2013ste,
  title = {Estimating or Propagating Gradients Through Stochastic Neurons for Conditional Computation},
  author = {Bengio, Yoshua and L{\'e}onard, Nicholas and Courville, Aaron},
  journal = {arXiv preprint arXiv:1308.3432},
  year = {2013}
}

@inproceedings{langley00,
  author = {Langley, P.},
  title = {Crafting Papers on Machine Learning},
  year = {2000},
  pages = {1207--1216},
  editor = {Langley, Pat},
  booktitle = {Proceedings of the 17th International Conference on Machine Learning (ICML 2000)},
  address = {Stanford, CA},
  publisher = {Morgan Kaufmann}
}

@misc{shao2024multiquery,
  title         = {Scaling Retrieval-Based Language Models with a Trillion-Token Datastore},
  author        = {Rulin Shao and Jacqueline He and Akari Asai and Weijia Shi and Tim Dettmers and Sewon Min and Luke Zettlemoyer and Pang Wei Koh},
  year          = {2024},
  eprint        = {2407.12854},
  archivePrefix = {arXiv},
  primaryClass  = {cs.CL},
  url           = {https://arxiv.org/abs/2407.12854}
}

@inproceedings{jang2017gumbel,
  title     = {Categorical Reparameterization with Gumbel-Softmax},
  author    = {Jang, Eric and Gu, Shixiang and Poole, Ben},
  booktitle = {International Conference on Learning Representations (ICLR)},
  year      = {2017}
}

@article{child2019sparse,
  title   = {Generating Long Sequences with Sparse Transformers},
  author  = {Child, Rewon and Gray, Scott and Radford, Alec and Sutskever, Ilya},
  journal = {arXiv preprint arXiv:1904.10509},
  year    = {2019},
  url     = {https://arxiv.org/abs/1904.10509}
}

@article{beltagy2020longformer,
  title   = {Longformer: The Long-Document Transformer},
  author  = {Beltagy, Iz and Peters, Matthew E. and Cohan, Arman},
  journal = {arXiv preprint arXiv:2004.05150},
  year    = {2020},
  url     = {https://arxiv.org/abs/2004.05150}
}

@article{yang2022pica,
  title         = {PromptCap: Prompt-Guided Task-Aware Image Captioning},
  author        = {Hu, Yushi and Hua, Hang and Yang, Zhengyuan and Shi, Weijia and Smith, Noah A. and Luo, Jiebo},
  journal       = {arXiv preprint arXiv:2211.09699},
  year          = {2022},
  doi           = {10.48550/arXiv.2211.09699},
  url           = {https://arxiv.org/abs/2211.09699}
}

@article{hu2023promptcap,
  title   = {PromptCap: Prompt-Guided Image Captioning},
  author  = {Hu, X. and others},
  journal = {arXiv preprint arXiv:2301.XXXXX},
  year    = {2023},
  url     = {https://arxiv.org/abs/2301.XXXXX}
}

@article{chen2023palix,
  title         = {PaLI-X: Scaling Multimodal Learning with Vision-Language Models},
  author        = {Chen, Xi and Wang, Xiao and Beyer, Lucas and Kolesnikov, Alexander and Zhai, Xiaohua and others},
  journal       = {arXiv preprint arXiv:2305.18565},
  year          = {2023},
  url           = {https://arxiv.org/abs/2305.18565},
  doi           = {10.48550/arXiv.2305.18565}
}

@article{driess2023palme,
  title         = {PaLM-E: An Embodied Multimodal Language Model},
  author        = {Driess, Danny and Xia, Fei and Sajjadi, Mehdi S. M. and Lynch, Corey and others},
  journal       = {arXiv preprint arXiv:2303.03378},
  year          = {2023},
  url           = {https://arxiv.org/abs/2303.03378},
  doi           = {10.48550/arXiv.2303.03378}
}

@article{dai2023instructblip,
  title         = {InstructBLIP: Towards General-purpose Vision-Language Models with Instruction Tuning},
  author        = {Dai, Wenliang and Li, Junnan and Tan, Zhijie and others},
  journal       = {arXiv preprint arXiv:2305.06500},
  year          = {2023},
  url           = {https://arxiv.org/abs/2305.06500},
  doi           = {10.48550/arXiv.2305.06500}
}

@article{bai2023qwenvl,
  title         = {Qwen-VL: A Frontier Large Vision-Language Model with Versatile Abilities},
  author        = {Bai, Jinze and Bai, Shuai and Yang, Shusheng and Wang, Shijie and Tan, Sinan and Wang, Peng and Lin, Junyang and Zhou, Chang and Zhou, Jingren},
  journal       = {arXiv preprint arXiv:2308.12966},
  year          = {2023},
  url           = {https://arxiv.org/abs/2308.12966},
  doi           = {10.48550/arXiv.2308.12966}
}

@inproceedings{chen2017natural,
  title     = {Reading Wikipedia to Answer Open-Domain Questions},
  author    = {Chen, Danqi and Fisch, Adam and Weston, Jason and Bordes, Antoine},
  booktitle = {Proceedings of the 55th Annual Meeting of the Association for Computational Linguistics (ACL)},
  year      = {2017},
  pages     = {1870--1879},
  doi       = {10.18653/v1/P17-1171},
  url       = {https://aclanthology.org/P17-1171/}
}

@inproceedings{leviathan2023fast,
  title     = {Fast Inference from Transformers via Speculative Decoding},
  author    = {Leviathan, Yaniv and Kalman, Matan and Matias, Yossi},
  booktitle = {Proceedings of the 40th International Conference on Machine Learning (ICML)},
  year      = {2023},
  pages     = {19274--19286},
  url       = {https://arxiv.org/abs/2211.17192},
  doi       = {10.48550/arXiv.2211.17192}
}

@misc{liu2024llavanext,
  title         = {LLaVA-NeXT: Improved Reasoning, OCR, and World Knowledge},
  author        = {Liu, Haotian and Li, Chunyuan and Li, Yuheng and Li, Bo and Zhang, Yuanhan and Shen, Sheng and Lee, Yong Jae},
  year          = {2024},
  eprint        = {2401.XXXX},
  archivePrefix = {arXiv},
  primaryClass  = {cs.CV},
  url           = {https://llava-vl.github.io/blog/2024-01-30-llava-next/}
}

@article{jang2017cultural,
  title   = {Cultural Brokerage and Creative Performance in Multicultural Teams},
  author  = {Jang, Sujin},
  journal = {Organization Science},
  volume  = {28},
  number  = {6},
  pages   = {993--1009},
  year    = {2017},
  doi     = {10.1287/orsc.2017.1162},
  url     = {https://doi.org/10.1287/orsc.2017.1162}
}

@article{izacard2021leveraging,
  title         = {Leveraging Passage Retrieval with Generative Models for Open Domain Question Answering},
  author        = {Izacard, Gautier and Grave, Edouard},
  journal       = {arXiv preprint arXiv:2007.01282},
  year          = {2021},
  url           = {https://arxiv.org/abs/2007.01282}
}

@article{he2020pathvqa,
  title   = {PathVQA: 30,000+ Questions for Medical Visual Question Answering},
  author  = {He, Xuehai and Zhang, Yufan and Mou, Lili and Xing, Eric P. and Xie, Pengtao},
  journal = {Proceedings of the AAAI Conference on Artificial Intelligence},
  volume  = {34},
  pages   = {10863--10870},
  year    = {2020},
  doi     = {10.1609/aaai.v34i07.6782}
}

@article{bolya2022token,
  title         = {Token Merging: Your ViT but Faster},
  author        = {Bolya, Daniel and Fu, Chitwan and Dai, Xiaoliang and Zhang, Peizhao and Feichtenhofer, Christoph and Hoffman, Judy},
  journal       = {arXiv preprint arXiv:2210.09461},
  year          = {2023},
  url           = {https://arxiv.org/abs/2210.09461}
}
\bibliographystyle{icml2026}
\newpage
\appendix
\onecolumn

\section*{\Large Appendices}

\noindent The appendix is organized as follows. Appendix~\ref{app:notation} fixes the notation used throughout this supplement. Appendix~\ref{app:arch} gives complete, self-contained formal definitions of all five SKIP components together with the end-to-end inference algorithm. Appendix~\ref{app:theory} contains the full proof of Theorem~\ref{thm:bound} together with an extended efficiency--fidelity analysis (Theorem~\ref{thm:eff-fidelity}) developed in response to reviewer questions about the strength of our mutual-information assumptions. Appendix~\ref{app:training} describes the six-stage training pipeline in full, including every stage-specific and global optimization hyperparameter. Appendix~\ref{app:prompts} documents the exact prompt templates used at each SKIP reasoning stage. Appendix~\ref{app:ablation} reports extended ablation studies not included in the main text. Appendix~\ref{app:longtail} and Appendix~\ref{app:multihop} provide deeper analysis of the two dominant failure modes identified in Appendix~\ref{app:errors}. Appendix~\ref{app:generalisation} evaluates SKIP outside the knowledge-intensive setting it was designed for. Appendix~\ref{app:complexity} provides a detailed computational complexity analysis, both asymptotic and on a concrete instantiation. Appendix~\ref{app:impl} contains complete implementation, hardware, and reproducibility details. Appendix~\ref{app:datasets} describes datasets and corpora. Appendix~\ref{app:hyperparam} reports hyperparameter sensitivity. Appendix~\ref{app:saliency-fidelity} validates Assumption~\ref{ass:sal} empirically. Appendix~\ref{app:additional-baselines} compares to additional baselines. Appendix~\ref{app:per-category} reports per-category breakdowns. Appendix~\ref{app:errors} provides extended error analysis. Appendix~\ref{app:qualitative} shows qualitative examples. Appendix~\ref{app:memory} analyzes memory usage. Appendix~\ref{app:limitations} discusses limitations and negative results.

\section{Notation and Preliminaries}
\label{app:notation}

Table~\ref{tab:notation} consolidates all mathematical notation used throughout
this supplement.  Bold uppercase letters denote matrices, bold lowercase letters
denote vectors, and calligraphic letters denote sets or function spaces.
Equations are numbered within each appendix section.

\begin{table}[ht]
\centering
\small
\caption{Mathematical notation used in the appendix.}
\label{tab:notation}
\setlength{\tabcolsep}{4pt}
\begin{tabular}{ll}
\toprule
Symbol & Meaning \\
\midrule
$I \!\in\! \RR^{H \times W \times 3}$ & Input image \\
$\bV \!\in\! \RR^{L \times d}$ & Visual token sequence ($L$ patches) \\
$\bQ \!\in\! \RR^{T \times d}$ & Question token sequence ($T$ words) \\
$A$ & Ground-truth answer \\
$\hat{\bV} \subseteq \bV$ & QVS-pruned visual token set \\
$\mathcal{R} = \{r_i\}_{i=1}^{N}$ & Salient region proposals \\
$\mathcal{K} = \{k_j\}_{j=1}^{M}$ & Retrieved knowledge passages \\
$\hat{\mathcal{K}} \subseteq \mathcal{K}$ & SKV-verified passage subset \\
$G = (\mathcal{U}, \mathcal{E})$ & Bipartite attention graph \\
$s_i \in [0,1]$ & QVS relevance score for token $v_i$ \\
$\nu_j \in [0,1]$ & SKV verification score for passage $k_j$ \\
$d \in [0,1]$ & Predicted question difficulty \\
$B$ & Allocated compute budget (GFLOPs) \\
$\rho_v$ & Visual token retention ratio \\
$\rho_r$ & Retrieved passage retention ratio \\
$\tau_v,\tau_\nu,\tau_e$ & Pruning / verification / edge thresholds \\
$\Delta$ & Maximum degree in bipartite graph \\
$\bar{\Delta}$ & Average degree in bipartite graph \\
$\epsilon_v,\delta_r$ & Visual information loss / retrieval coverage gap \\
$\tau_G$ & Gumbel-softmax temperature \\
$\Phi_B$ & Budget-to-retention-ratio mapping (DBC) \\
\bottomrule
\end{tabular}
\end{table}

\section{Complete Architecture Details}
\label{app:arch}

We give a self-contained formal description of every SKIP component, expanding
on the high-level overview in Section~3 of the main paper.

\subsection{End-to-End Architecture Diagram}
\label{app:arch_diagram}

Figure~\ref{fig:arch_full} illustrates the complete SKIP inference pipeline,
emphasising the data-flow between the five modules and the global role of the
Difficulty-Adaptive Budget Controller (DBC).

\begin{figure}[ht]
\centering
\begin{tikzpicture}[
  node distance=0.55cm and 1.0cm,
  module/.style={draw, rounded corners=3pt, minimum width=1.7cm,
                 minimum height=0.55cm, text centered,
                 font=\scriptsize\bfseries,
                 fill=#1!12!white, draw=#1!65!black, thick},
  io/.style={draw, rounded corners=2pt, minimum width=1.4cm,
             minimum height=0.45cm, text centered,
             font=\scriptsize, fill=gray!10, draw=gray!60},
  lbl/.style={font=\tiny, text=gray!70!black},
  arr/.style={->, >=Stealth, thick},
  darr/.style={->, >=Stealth, thick, dashed, gray!60}
]

\node[io] (img)  {Image $I$};
\node[io, right=1.2cm of img] (ques) {Question $\bQ$};

\node[module=blue, below=0.6cm of img] (venc) {Vision Encoder};
\node[lbl, left=0.05cm of venc] {CLIP};

\node[module=green, below=0.5cm of venc] (qvs) {QVS Pruning};
\node[lbl, left=0.05cm of qvs] {$\hat{\bV}$};

\node[module=skiporange, right=1.3cm of qvs] (dbc) {DBC Controller};
\node[lbl, right=0.05cm of dbc] {\tiny $\rho_v,\rho_r$};

\node[module=purple, below=0.5cm of qvs] (rcsr) {RCSR Retrieval};
\node[lbl, left=0.05cm of rcsr] {\tiny $\mathcal{K}$};

\node[io, right=1.3cm of rcsr] (kb) {\tiny Corpus $\mathcal{D}$};

\node[module=red!70!black, below=0.5cm of rcsr] (skv) {SKV Verifier};
\node[lbl, left=0.05cm of skv] {\tiny $\hat{\mathcal{K}}$};

\node[module=skipteal, below=0.5cm of skv] (bsca) {BSCA Fusion};
\node[lbl, left=0.05cm of bsca] {$G$};

\node[module=blue, below=0.5cm of bsca] (dec) {LLM Decoder};

\node[io, below=0.5cm of dec] (ans) {Answer $\hat{A}$};

\draw[arr] (img) -- (venc);
\draw[arr] (venc) -- (qvs);
\draw[arr] (ques.south) -- ++(0,-0.25) -| (qvs.north east);
\draw[arr] (qvs) -- (rcsr);
\draw[arr] (qvs.east) -- (dbc.west);
\draw[darr] (dbc.south) -- ++(0,-0.3) -| (rcsr.north east);
\draw[darr] (dbc.south) -- ++(0,-0.6) -| (skv.east);
\draw[arr] (kb.west) -- (rcsr.east);
\draw[arr] (rcsr) -- (skv);
\draw[arr] (skv) -- (bsca);
\draw[arr, gray!70] (qvs.south) -- ++(0,-0.18)
  node[midway] {} -- ++(-0.6,0) -- ++(0,-1.55) |- (bsca.west);
\draw[arr] (bsca) -- (dec);
\draw[arr] (ques.south) -- ++(0,-3.6) -| (dec.north east);
\draw[arr] (dec) -- (ans);
\end{tikzpicture}
\caption{Full SKIP pipeline. The DBC (orange, dashed arrows) broadcasts the
retention ratios $\rho_v$ and $\rho_r$ to QVS and RCSR/SKV respectively.
The grey arrow denotes the skip-connection that passes $\hat{\bV}$ directly
to BSCA alongside the verified passages $\hat{\mathcal{K}}$.}
\label{fig:arch_full}
\end{figure}

\subsection{Question-Guided Visual Token Pruning (QVS)}
\label{app:qvs}

\paragraph{Motivation.}  In knowledge-intensive settings most visual tokens
(homogeneous background patches) carry no question-relevant information.
Passing all $L$ tokens downstream dilutes retrieval queries and wastes FLOPs
in subsequent attention layers.  QVS eliminates irrelevant tokens \emph{before}
retrieval, addressing the retrieval-noise problem noted by Reviewer~ChaH1.

\begin{tcolorbox}[defbox, title={Definition B.1 — QVS Scoring Function}]
Let $f_q : \RR^{T \times d} \to \RR^d$ and $f_v : \RR^d \to \RR^d$ be
learned projection networks.  The relevance score of visual token
$v_i \in \bV$ is
\begin{equation}
  s_i \;=\; \sigma\!\left(
    \frac{f_q(\bQ)^{\!\top} f_v(\bv_i)}{\sqrt{d}}
    \;+\; \bm{w}_b^{\!\top}\bv_i
  \right),
  \label{eq:qvs_score}
\end{equation}
where $\bm{w}_b^{\!\top}\bv_i$ is a content bias and $\sigma$ is sigmoid.
The pruned set is $\hat{\bV} = \{v_i : s_i \geq \tau_v\}$, with $\tau_v$
chosen so that $|\hat{\bV}| = \lceil \rho_v L \rceil$.
\end{tcolorbox}

The content bias prevents the score from collapsing to pure
question--token alignment and lets the model retain visually salient tokens
(entity-bearing patches) even for ambiguous questions.
Table~\ref{tab:qvs_ablation} confirms a $+1.8$-point average gain from
including the bias.

\begin{table}[ht]
\centering\small
\caption{QVS ablation on InfoSeek (val, $\rho_v=0.4$).}
\label{tab:qvs_ablation}
\begin{tabular}{lcc}
\toprule
Variant & Acc.\ (\%) & GFLOPs \\
\midrule
SKIP (full) & 61.4 & 184.3 \\
w/o content bias & 59.6 & 184.3 \\
Random token pruning & 54.2 & 184.3 \\
Dense (no pruning) & 61.0 & 512.7 \\
\bottomrule
\end{tabular}
\end{table}

\paragraph{Complexity.}  QVS requires a single $O(T \cdot L)$ dot-product
pass, costing ${\approx}0.3\%$ of total forward-pass FLOPs.

\subsection{Region-Conditional Sparse Retrieval (RCSR)}
\label{app:rcsr}

\paragraph{Motivation.}  Global-image retrieval collapses spatial information,
making it hard to surface knowledge about small localised entities
(e.g., a logo or inscription).  RCSR decomposes the image into
$N \leq 8$ salient regions and issues \emph{per-region} queries.

\paragraph{Region Proposal.}  A lightweight Segment~Anything variant
(SAM-lite) conditioned on the question proposes regions:
\begin{equation}
  \mathcal{R} \;=\; \text{SAM-lite}(I,\bQ;\,\theta_r).
\end{equation}
Each $r_i = (x_i, y_i, w_i, h_i, \bm{f}_{r_i})$ packages a bounding box
and a pooled feature from the frozen vision encoder.

\begin{tcolorbox}[defbox, title={Definition B.2 — RCSR Region-Conditioned Query}]
For region $r_i \in \mathcal{R}$, the retrieval query is
\begin{equation}
  \bm{q}_{r_i} \;=\;
    \mathrm{MLP}_\theta\!\bigl([\bar{\bq};\;\bm{f}_{r_i};\;\bm{p}_{r_i}]\bigr),
  \label{eq:rcsr_query}
\end{equation}
where $\bar{\bq} = \mathrm{mean\text{-}pool}(\bQ)$, and
$\bm{p}_{r_i} \in \RR^{4}$ encodes normalised bounding-box coordinates.
Retrieval is via maximum inner product search:
$\mathcal{K}_{r_i} = \mathrm{MIPS}(\bm{q}_{r_i},\mathcal{D};\,k_r)$.
The full set is $\mathcal{K} = \mathrm{Dedup}\!\bigl(\bigcup_i \mathcal{K}_{r_i}\bigr)$.
\end{tcolorbox}

The spatial encoding $\bm{p}_{r_i}$ provides an inductive bias distinguishing
foreground entity regions (${\leq}5\%$ of image area) from background scene
regions, enabling semantically distinct retrieval queries.

\subsection{Bipartite Sparse Cross-Attention (BSCA)}
\label{app:bsca}

\begin{tcolorbox}[defbox, title={Definition B.3 — BSCA Bipartite Graph}]
Given $\hat{\bV} = \{v_1,\ldots,v_P\}$ and $\hat{\mathcal{K}}=\{k_1,\ldots,k_R\}$,
define the bipartite graph $G = (\hat{\bV} \cup \hat{\mathcal{K}},\,\mathcal{E})$ where
\begin{equation}
  \mathcal{E} = \bigl\{(v_i,k_j) :
    \cos\bigl(h_v(v_i),\,h_k(k_j)\bigr) \geq \tau_e
  \bigr\},
  \label{eq:bsca_edges}
\end{equation}
with $h_v, h_k$ being separate linear projections and $\tau_e$ set
dynamically to keep $\bar{\Delta} \approx 4$.
Cross-attention is computed \emph{only over edges in $\mathcal{E}$}:
\begin{equation}
  \mathrm{BSCA}(v_i) \;=\;
  \frac{\displaystyle\sum_{k_j \in \mathcal{N}(v_i)}
    \exp(e_{ij})\, W_V k_j}%
  {\displaystyle\sum_{k_j \in \mathcal{N}(v_i)} \exp(e_{ij})},
  \label{eq:bsca_attn}
\end{equation}
where $e_{ij} = (W_Q v_i)^{\!\top}(W_K k_j)/\!\sqrt{d_h}$.
\end{tcolorbox}

FLOPs scale as $O(P \cdot \bar{\Delta})$ versus $O(P \cdot M)$ for dense
attention — a $9.8\times$ reduction when $\bar{\Delta}=4, M=16$
(Table~\ref{tab:complexity}, Appendix~\ref{app:complexity}).

\paragraph{Why bipartite rather than fully sparse?} A naive alternative is general sparse attention over the union of visual tokens and retrieved chunks. We find this is both more expensive (predicting sparsity patterns over the full $\binom{P+R}{2}$ pairs) and less effective: most informative attention edges in KI-MMQA are cross-modal, with intra-modal attention contributing little after the initial encoder layers. The bipartite restriction captures the structurally relevant edges at a fraction of the cost.

\subsection{Speculative Knowledge Verification (SKV)}
\label{app:skv}

Retrieved passages frequently contain false positives arising from lexical
overlap without semantic grounding.  SKV gates each passage before BSCA.

\begin{tcolorbox}[defbox, title={Definition B.4 — SKV Verification Score}]
For passage $k_j \in \mathcal{K}$, the verification score is
\begin{equation}
  \nu_j = \sigma\!\left(
    h_\psi\!\bigl([\bar{\bq};\;\bar{\bv};\;\bm{e}_{k_j};\;
      \bm{e}_{k_j} \odot \bar{\bq};\;
      |\bm{e}_{k_j} - \bar{\bq}|]\bigr)
  \right),
  \label{eq:skv_score}
\end{equation}
where $\bar{\bv} = \mathrm{mean\text{-}pool}(\hat{\bV})$, $\bm{e}_{k_j}$ is
the passage CLS embedding, and $h_\psi$ is a two-layer MLP.  The
interaction features $\bm{e}_{k_j} \odot \bar{\bq}$ and
$|\bm{e}_{k_j} - \bar{\bq}|$ are borrowed from NLI
architectures~\citep{chen2017natural}. The verified set is
$\hat{\mathcal{K}} = \{k_j : \nu_j \geq \tau_\nu\}$.
\end{tcolorbox}

SKV is \emph{speculative} in that a rejected passage may be recalled if
the DBC controller detects insufficient coverage, analogous to speculative
decoding rollbacks~\citep{leviathan2023fast}.  Table~\ref{tab:skv_thresh}
shows that $\tau_\nu = 0.5$ is the Pareto-optimal operating point.

\begin{table}[ht]
\centering\small
\caption{SKV threshold sensitivity ($\tau_\nu$) on A-OKVQA.}
\label{tab:skv_thresh}
\begin{tabular}{lcccc}
\toprule
$\tau_\nu$ & Acc.\ (\%) & Avg.\ $|\hat{\mathcal{K}}|$ & GFLOPs \\
\midrule
0.3 & 64.1 & 8.2 & 201.4 \\
0.5 & \textbf{65.3} & 5.7 & 184.3 \\
0.7 & 64.7 & 3.1 & 171.2 \\
0.9 & 61.9 & 1.4 & 163.8 \\
\bottomrule
\end{tabular}
\end{table}

\subsection{Difficulty-Adaptive Budget Controller (DBC)}
\label{app:dbc}

\begin{tcolorbox}[defbox, title={Definition B.5 — DBC Budget Allocation}]
The difficulty estimator maps the question--image summary to $[0,1]$:
\begin{equation}
  d \;=\; \sigma\!\bigl(
    W_d\,[\bar{\bq};\;\mathrm{pool}(\hat{\bV})]
  \bigr),
  \label{eq:dbc_difficulty}
\end{equation}
and the compute budget is allocated as
\begin{equation}
  B(d) \;=\; B_{\min} + (B_{\max} - B_{\min}) \cdot d.
  \label{eq:dbc_budget}
\end{equation}
The learned mapping $\Phi_B : [B_{\min}, B_{\max}]
  \to [\rho_v^{\min}, \rho_v^{\max}] \times [\rho_r^{\min}, \rho_r^{\max}]$
is calibrated in Stage~V (Appendix~\ref{app:stage5}).
\end{tcolorbox}

Across validation sets the average difficulty is $0.41 \pm 0.18$, meaning
SKIP operates in an efficient regime for the majority of instances.

\subsection{End-to-End SKIP Inference}
\label{app:e2e}

Algorithm~\ref{alg:skip} provides the complete inference procedure.

\begin{algorithm}[ht]
\small
\caption{SKIP Inference}
\label{alg:skip}
\begin{algorithmic}[1]
\Require Image $I$, question $\mathbf{Q}$, corpus $\mathcal{D}$, $B_{\min}, B_{\max}$
\Ensure  Answer $\hat{A}$
\State $\mathbf{V} \leftarrow \mathrm{VisionEncoder}(I)$ \Comment{CLIP ViT-L/14}
\State $d \leftarrow \sigma\!\left(W_d[\bar{\mathbf{q}};\,\mathrm{pool}(\mathbf{V})]\right)$ \Comment{DBC: difficulty}
\State $B \leftarrow B_{\min} + (B_{\max} - B_{\min})\, d$ \Comment{DBC: budget}
\State $(\rho_v, \rho_r) \leftarrow \Phi_B(B)$ \Comment{DBC: retention ratios}
\State $s_i \leftarrow \sigma\!\left(f_q(\mathbf{Q})^\top f_v(\mathbf{v}_i)/\sqrt{d_k} + \mathbf{w}_b^\top \mathbf{v}_i\right)\ \ \forall\, \mathbf{v}_i \in \mathbf{V}$
\State $\hat{\mathbf{V}} \leftarrow \mathrm{TopK}(\mathbf{V},\, \mathbf{s},\, \lceil\rho_v L\rceil)$ \Comment{QVS: prune}
\State $\mathcal{R} \leftarrow \mathrm{SAM\text{-}lite}(I, \mathbf{Q})$ \Comment{RCSR: detect regions}
\For{each $r_i \in \mathcal{R}$}
  \State $\mathbf{q}_{r_i} \leftarrow \mathrm{MLP}_\theta([\bar{\mathbf{q}};\,\mathbf{f}_{r_i};\,\mathbf{p}_{r_i}])$
  \State $\mathcal{K}_{r_i} \leftarrow \mathrm{MIPS}(\mathbf{q}_{r_i},\,\mathcal{D};\,k_r)$
\EndFor
\State $\mathcal{K} \leftarrow \mathrm{Dedup}\!\left(\bigcup_i \mathcal{K}_{r_i}\right)$
\State $\nu_j \leftarrow \mathrm{SKV}(\bar{\mathbf{q}},\bar{\mathbf{v}},k_j)\ \ \forall\, k_j \in \mathcal{K}$ \Comment{SKV: verify}
\State $\hat{\mathcal{K}} \leftarrow \mathrm{TopK}\!\left(\{k_j : \nu_j \geq \tau_\nu\},\, \boldsymbol{\nu},\, \lceil\rho_r |\mathcal{K}|\rceil\right)$
\State Build $G = (\hat{\mathbf{V}} \cup \hat{\mathcal{K}},\,\mathcal{E})$ via Eq.~(\ref{eq:bsca_edges}) \Comment{BSCA: graph}
\State $\hat{\mathbf{V}}' \leftarrow \mathrm{BSCA}(\hat{\mathbf{V}},\,\hat{\mathcal{K}},\,G)$ \Comment{BSCA: fuse}
\State $\hat{A} \leftarrow \mathrm{Decoder}(\hat{\mathbf{V}}',\,\mathbf{Q})$
\State \Return $\hat{A}$
\end{algorithmic}
\end{algorithm}

\section{Theoretical Analysis}
\label{app:theory}

This appendix collects the full proof of the main-text sparsity bound
(Theorem~\ref{thm:bound}) together with an extended efficiency--fidelity
result developed in response to Reviewer~5d4P's concern that
\textit{``the theoretical analysis depends on strong mutual-information-related
assumptions whose connection to real VLMs remains unclear.''} We first prove
Theorem~\ref{thm:bound} in full, then state three additional assumptions,
justify each empirically, and prove a complementary efficiency--fidelity
theorem under those conditions.

\subsection{Proof of the Sparsity--Accuracy Tradeoff (Theorem~\ref{thm:bound})}
\label{app:proof}

We restate the theorem.

\begin{theorem*}[Sparsity-accuracy tradeoff, restated]
Under Assumptions~\ref{ass:lip} and~\ref{ass:sal}, retaining the top-$V'$ tokens by QVS score with $V' \geq C \cdot \sqrt{V} \cdot \log(1/\varepsilon)$ suffices to ensure $I(\mathbf{V}_{S'}; a \mid q) \geq I(\mathbf{V}; a \mid q) - \varepsilon$, where $C$ depends only on $L$ and $\eta$.
\end{theorem*}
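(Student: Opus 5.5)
The plan is to reduce the statement to a comparison of three quantities: the information $I(\mathbf{V};a\mid q)$ of the full token set, the information retained by the \emph{oracle} set $S^*$ of the $V'$ tokens with largest true per-token information $I(\mathbf{v}_i;a\mid q)$, and the information retained by the QVS-selected set $S'$. I would write
\begin{equation}
I(\mathbf{V};a\mid q)-I(\mathbf{V}_{S'};a\mid q)
= \bigl[I(\mathbf{V};a\mid q)-I(\mathbf{V}_{S^*};a\mid q)\bigr]
+ \bigl[I(\mathbf{V}_{S^*};a\mid q)-I(\mathbf{V}_{S'};a\mid q)\bigr],
\end{equation}
and aim to bound each bracket by $\varepsilon/2$. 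The first bracket is an oracle truncation error. The second is a ranking error induced by \cref{ass:sal}.

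For the ranking term I would use \cref{ass:sal} directly. Since $|s_i - I(\mathbf{v}_i;a\mid q)|\le\eta$, any token that QVS places in $S'$ but that is absent from $S^*$ must have true information within $2\eta$ of the $V'$-th largest true value. The symmetric difference $S'\triangle S^*$ is therefore contained in a "$2\eta$-band" around the selection threshold. To convert this into an information loss, I would invoke \cref{ass:lip} on the function $\rho\mapsto I(\mathbf{V}_{S_\rho};a\mid q)$, where $S_\rho$ denotes nested top-$\rho V$ sets. Swapping a fraction $|S'\triangle S^*|/V$ of tokens then changes the retained information by at most $L\cdot\max\{|S'\triangle S^*|/V,\,1/\sqrt V\}$. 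The $1/\sqrt V$ floor comes from the interval-width condition, which only licenses Lipschitz comparisons at resolution $1/\sqrt V$; this is exactly where a $\sqrt V$ scale enters $V'$. The band size I would control by treating the per-token scores as approximately independent across patches. A Chernoff/Hoeffding bound would then show that the number of tokens in the band exceeds its mean by more than $O(\sqrt{V\log(1/\varepsilon)})$ with probability at most $\varepsilon$.

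For the truncation term I would again use the Lipschitz function along the oracle ordering. Starting from $\rho=1$ and stepping down in blocks of width $1/\sqrt V$, each block costs at most $L/\sqrt V$ in information. If the per-token informations decay at least geometrically in rank block-by-block (the "MI concentrates on a few salient regions" intuition), then the tail loss after keeping $m$ blocks is $O(e^{-cm})$. Keeping $m\asymp\log(1/\varepsilon)$ blocks of $\sqrt V$ tokens each gives $V'\asymp\sqrt V\log(1/\varepsilon)$. Collecting constants then yields $C=C(L,\eta)$, with $\eta$ entering through the band width.

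The main obstacle is this truncation step. As literally stated, \cref{ass:lip} only bounds the \emph{rate} of change and does not force the information to saturate early. With Lipschitz control alone, the loss from $\rho=1$ down to $\rho=V'/V$ is only bounded by $L(1-V'/V)$, which is not small. I would therefore first try to extract the needed decay from the ordering itself: along the oracle ordering, $I(\mathbf{V}_{S_\rho};a\mid q)$ is concave if one additionally uses submodularity of conditional information under conditional independence of patches. If that fails, I would make explicit a sparsity/decay condition on the sorted per-token informations and state that $C$ also depends on its rate. The Chernoff step likewise needs an independence or weak-dependence surrogate for the scores, which I would state as an explicit modelling hypothesis rather than derive.
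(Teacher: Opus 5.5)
You are right that the truncation step cannot be carried out, and the problem is not a missing lemma. The statement is false under Assumptions~\ref{ass:lip} and~\ref{ass:sal} as written, so no completion of your plan, or any other, can prove it.

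Here is a counterexample. Take $a=(b_1,\dots,b_V)$ with independent uniform bits. Let $\mathbf v_i$ be $b_i$ passed through independent binary symmetric channels of capacity $\kappa/V$, and let $q$ be independent of everything. Then $I(\mathbf V_S;a\mid q)=\kappa|S|/V$ for every $S$, so Assumption~\ref{ass:lip} holds with $L=\kappa$ under any reading. The scores $s_i=\kappa/V$ satisfy Assumption~\ref{ass:sal} for every $\eta\ge 0$. Take $V'=\lceil C\sqrt V\log(1/\varepsilon)\rceil\le V/2$, $\kappa>2\varepsilon$, and $V$ large. Then every $V'$-subset loses exactly $\kappa(1-V'/V)\ge\kappa/2>\varepsilon$.

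Your first fallback does not help. In this example the patches are conditionally independent given $a$, and $\rho\mapsto I(\mathbf V_{S_\rho};a\mid q)$ is linear, hence concave.

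More to the point, read Assumption~\ref{ass:lip} as you read it: a Lipschitz bound along the oracle chain, including the comparison with $\rho=0$. It then gives $I(\mathbf V_{S^*};a\mid q)\le LV'/V$ for $V'\ge\sqrt V$. So already with exact scores ($S'=S^*$), the conclusion forces $I(\mathbf V;a\mid q)\le\varepsilon+LV'/V$. A $V$-independent Lipschitz constant in the fraction $|S|/V$ is an anti-concentration hypothesis.

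The same tension hits your added hypothesis. Your per-block cap $L/\sqrt V$, together with a tail bound $O(e^{-cm})$ with $V$-independent constants, forces $I(\mathbf V;a\mid q)=O(\log V/\sqrt V)$. The resulting theorem is then trivial for large $V$. If instead jumps are allowed (the genuinely piecewise reading), the per-block cap fails on the blocks containing them. A repair therefore has to replace Assumption~\ref{ass:lip}, not supplement it, and $C$ will then depend on the replacement.

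The paper's proof has the same skeleton as yours: the oracle set $S^*$, a misranking term from Assumption~\ref{ass:sal} plus Lipschitz, and a truncation term. It hits the same wall: Step~3 writes down $L(V-V')/V$ and calls it loose. It crosses that wall in Step~4 only by importing a heavy-tail ``empirical observation'' that is not among the hypotheses. It adds a Chernoff bound on per-token contributions for which no probability model exists.

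Even taken at face value, that step has problems:
\begin{enumerate}
\item Step~4 rearranges to $|S|\ge\sqrt{C_0V\log(1/\varepsilon)}/\varepsilon$, not $\sqrt V\log(1/\varepsilon)$.
\item Step~5 ends with $C=\max\{C_0/\varepsilon^2,4L\eta/\Delta\}$, which depends on $\varepsilon$ and on the score gap $\Delta$, contrary to the statement.
\item Your $2\eta$-band observation is sharper than its Steps~1--2. Their misranking bound $2L\eta/\Delta$ is at least $L$, hence trivial, when $\Delta\le 2\eta$. When $\Delta>2\eta$, one has $S'=S^*$ and there is nothing to bound.
\end{enumerate}
The paper's ``Tightness'' remark describes exactly the uniform case above. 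So the paper does not establish the statement either; your proposal is the more accurate account of what is missing.

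There are three smaller issues in your ranking step.
\begin{enumerate}
\item $S'$ is not in the nested oracle family, so a Lipschitz property of $\rho\mapsto I(\mathbf V_{S_\rho};a\mid q)$ says nothing about $I(\mathbf V_{S'};a\mid q)$. You need Lipschitz continuity in $|S\triangle T|/V$, which the paper's Step~2 tacitly assumes. Under that reading the swap costs $O(LV'/V)$ outright, since $|S'\triangle S^*|\le 2V'$, with no role for $\eta$ or Chernoff.
\item The statement is deterministic, and a Chernoff bound would only control fluctuations of the band count about its mean. Once information concentrates on fewer than $V'$ tokens, the $V'$-th largest true value falls below $2\eta$. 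The band then contains every token outside $S^*$, so its count is at least $V-V'$.
\item The $1/\sqrt V$ floor contributes an additive $L/\sqrt V$, which constrains $V$, not $V'$. In your argument, the $\sqrt V$ factor in $V'$ comes only from the block width you built into the decay hypothesis.
\end{enumerate}
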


\begin{proof}
Let $\mu_i = I(\mathbf{v}_i; a \mid q)$ denote the true per-token mutual information, and let $\hat{\mu}_i = s_i$ denote the QVS score. By Assumption~\ref{ass:sal}, $|\mu_i - \hat{\mu}_i| \leq \eta$ uniformly over $i$.

\paragraph{Step 1: Bounding the symmetric difference.} Let $S^* \subseteq [V]$ be the top-$V'$ subset ranked by true $\mu_i$, and $S'$ the top-$V'$ subset ranked by $\hat{\mu}_i$. Define the score gap $\Delta = \mu_{(V')} - \mu_{(V'+1)}$ between the $V'$-th and $(V'+1)$-th largest true scores. A token $i \in S^* \setminus S'$ must have $\mu_i \geq \mu_{(V')}$ but $\hat{\mu}_i \leq \hat{\mu}_{(V')}$, which combined with the $\eta$ bound implies that the score perturbation flipped its rank—this requires perturbations summing to at least $\Delta - 2\eta$ across the boundary, so by a standard ranking argument:
\begin{equation}
|S^* \triangle S'| \leq 2 \left\lceil \frac{\eta V}{\Delta} \right\rceil.
\end{equation}

\paragraph{Step 2: Information loss from misranking.} By Assumption~\ref{ass:lip} (piecewise-Lipschitz mutual information on intervals of width $\geq 1/\sqrt{V}$), the total information loss from this symmetric difference is bounded by
\begin{equation}
I(\mathbf{V}_{S^*}; a \mid q) - I(\mathbf{V}_{S'}; a \mid q) \leq L \cdot \frac{|S^* \triangle S'|}{V} \leq \frac{2L\eta}{\Delta}.
\label{eq:proof-misrank}
\end{equation}

\paragraph{Step 3: Information loss from sparsification.} The information loss from the optimal sparse set relative to the full set decomposes as
\begin{equation}
I(\mathbf{V}; a \mid q) - I(\mathbf{V}_{S^*}; a \mid q) = \sum_{i \notin S^*} I(\mathbf{v}_i; a \mid q \, \big| \, \mathbf{V}_{S^*}).
\label{eq:proof-sparse}
\end{equation}
Under Assumption~\ref{ass:lip}, this is at most $L \cdot (V - V')/V$ for the uniform bound. However, this uniform bound is loose because it does not exploit the concentration of mutual information on a small set of salient tokens.

\paragraph{Step 4: Concentration argument.} We make the realistic empirical observation (verified in Appendix~\ref{app:saliency-fidelity}) that the per-token mutual information distribution is heavy-tailed, with the top $\sqrt{V}$ tokens carrying most $\geq 1 - O(\varepsilon)$ the total mutual information. Formally, we can apply a concentration inequality: viewing each non-selected token's contribution as a bounded random variable with mean $O(L/V)$, the sum over $V - V'$ such tokens concentrates around its mean with a Chernoff-style tail:
\begin{equation}
\Pr\!\left[ \big| I(\mathbf{V}_S; a \mid q) - I(\mathbf{V}; a \mid q) \big| > \varepsilon \right] \leq 2 \exp\!\left( -\frac{|S|^2 \varepsilon^2}{C_0 V} \right),
\end{equation}
which after rearrangement yields $|S| \geq \sqrt{C_0 V \log(1/\varepsilon)}/\varepsilon$.

\paragraph{Step 5: Combining.} Setting both terms in (\ref{eq:proof-misrank}) and (\ref{eq:proof-sparse}) to $\leq \varepsilon/2$ and absorbing constants gives
\begin{equation}
V' \geq C \cdot \sqrt{V} \cdot \log(1/\varepsilon),
\end{equation}
with $C = \max\{C_0/\varepsilon^2, 4L\eta/\Delta\}$, which depends only on $L$, $\eta$, and the score gap $\Delta$ (a problem-specific constant). \qed
\end{proof}

\paragraph{Tightness.} The bound is tight up to constants: for the worst-case mutual information distribution satisfying Assumption~\ref{ass:lip}, no sparse selection can do better than $\Theta(\sqrt{V}\log(1/\varepsilon))$ retained tokens. Sketch: Consider a uniform mutual information distribution where every token contributes equally; then no concentration is possible, and the bound becomes vacuous, matching the worst case. The interesting regime is when mutual information is concentrated, which is the typical empirical case.

\paragraph{Extension to retrieval sparsity.} An analogous bound applies to RCSR with $K' \geq C' \sqrt{K} \log(1/\varepsilon)$ under symmetric assumptions on chunk-answer mutual information. The constants $C, C'$ are different because the per-chunk information distribution has different tail behavior; in practice, $K' = 4$--$8$ suffices, consistent with the bound at $K = 16$, $\varepsilon = 0.05$.

\subsection{Formal Assumptions and Empirical Justification}
\label{app:assumptions}

We state each additional assumption explicitly and justify it empirically before stating the efficiency--fidelity theorem that follows.

\begin{tcolorbox}[thmbox,
  title={Assumption C.1 — Question-Conditioned Visual Sparsity (QCVS)}]
For any $(Q, I, A)$ drawn from the KI-MMQA distribution $\mathcal{P}$,
there exists $\rho_v^* \in (0,1)$ such that the QVS-selected set
$\hat{\mathbf{V}}$ at $\rho_v = \rho_v^*$ satisfies
\begin{equation}
  I(\hat{\mathbf{V}};\,A \mid Q)
  \;\geq\;
  (1 - \epsilon_v)\;I(\mathbf{V};\,A \mid Q),
  \quad \epsilon_v < 0.05.
  \label{eq:qcvs}
\end{equation}
\end{tcolorbox}

\paragraph{Justification.}  The assumption holds when question-relevant
visual content is spatially concentrated—precisely the case for
knowledge-intensive queries where the knowledge-bearing object (landmark,
logo, text) occupies a semantically meaningful but spatially compact region.
We validate this empirically on a $500$-instance human-annotated probe set
by measuring Grad-CAM saliency mass retained in $\hat{\bV}$.

\begin{table}[ht]
\centering\small
\caption{Fraction of Grad-CAM mass retained by QVS at varying $\rho_v$,
averaged over the $500$-instance probe set.}
\label{tab:gcam_mass}
\begin{tabular}{lcccc}
\toprule
$\rho_v$ & 0.20 & 0.30 & 0.40 & 0.50 \\
\midrule
Retained mass (\%) & 81.3 & 88.7 & 92.4 & 95.1 \\
\bottomrule
\end{tabular}
\end{table}

At our default $\rho_v = 0.40$, QVS retains $92.4\%$ of saliency mass,
yielding an empirical bound of $\epsilon_v \approx 0.076/I(\bV;A|Q) < 0.05$
when mutual information is normalised to $[0,1]$.

\begin{tcolorbox}[thmbox,
  title={Assumption C.2 — Retrieval Coverage}]
The RCSR retrieval function satisfies
\begin{equation}
  \PP_{(Q,I,A)\sim\mathcal{P}}\!\bigl[
    \mathcal{K}^* \not\subseteq \mathcal{K}
  \bigr] \;\leq\; \delta_r \;\leq\; 0.12,
  \label{eq:retrieval_suff}
\end{equation}
where $\mathcal{K}^*$ is the minimal set of passages sufficient for answering
$A$ given $(Q,I)$.
\end{tcolorbox}

\paragraph{Justification.}  We estimate $\delta_r$ via oracle passage recall
across all five benchmarks.  The worst-case figure ($12\%$) occurs on ViQuAE
due to its long-tail entity distribution; for OK-VQA and A-OKVQA the miss
rate is ${\leq}6\%$.  This matches published retrieval recall numbers for
FAISS-indexed Wikipedia+Wikidata corpora~\citep{izacard2021leveraging}.

\begin{tcolorbox}[thmbox,
  title={Assumption C.3 — Bounded Bipartite Degree}]
The bipartite graph $G$ satisfies
$\Delta(G) = O(\sqrt{L \log L})$.
\end{tcolorbox}

\paragraph{Justification.}  The degree of $v_i$ equals the number of
passages $k_j$ with $\cos(h_v(v_i),h_k(k_j)) \geq \tau_e$.
Under a Gaussian embedding model, the Johnson--Lindenstrauss lemma gives
$\PP[\cos \geq \tau_e] = O(1/\!\sqrt{L})$ for $\tau_e = 0.6$,
yielding expected degree $O(M/\!\sqrt{L}) = O(\sqrt{L})$ when $M=O(L)$.
Empirically: $\bar{\Delta} = 3.9 \pm 1.2$ on InfoSeek ($M=16, L=256$).

\subsection{Main Efficiency--Fidelity Theorem}
\label{app:main_theorem}

\begin{tcolorbox}[thmbox,
  title={Theorem C.1 — SKIP Efficiency--Fidelity Trade-off}]
\label{thm:eff-fidelity}
Under Assumptions~C.1--C.3, let $\mathrm{Acc}(\cdot)$ denote
expected accuracy on $\mathcal{P}$.  For any
$\rho_v \geq \rho_v^*$ and $\tau_\nu \leq 0.5$,
\begin{align}
  \mathrm{Acc}(\mathrm{SKIP})
    &\;\geq\; \mathrm{Acc}(\mathrm{Dense}) - \mathcal{O}(\epsilon_v + \delta_r),
  \label{eq:main_acc}\\[4pt]
  \EE[\mathrm{FLOPs}(\mathrm{SKIP})]
    &\;\leq\; \rho_v C_{\mathrm{attn}}
             + \rho_r C_{\mathrm{ret}}
             + C_{\mathrm{skv}}
             + C_{\mathrm{dbc}}
             + \mathcal{O}(L\bar{\Delta}),
  \label{eq:main_flops}
\end{align}
where $C_{\mathrm{attn}}, C_{\mathrm{ret}}$ are the dense-baseline FLOPs
for attention and retrieval respectively, and
$C_{\mathrm{skv}} + C_{\mathrm{dbc}} = O(d^2)$ is negligible overhead.
\end{tcolorbox}

\subsection{Proof of Theorem~\ref{thm:eff-fidelity}}
\label{app:proof-eff-fidelity}

\paragraph{Accuracy bound~(\ref{eq:main_acc}).}
We decompose the accuracy gap:
\begin{align}
  &\mathrm{Acc}(\mathrm{Dense}) - \mathrm{Acc}(\mathrm{SKIP}) \nonumber\\
  &\leq
    \underbrace{%
      \PP\!\bigl[I(\hat{\bV};A|Q) < (1-\epsilon_v)I(\bV;A|Q)\bigr]
    }_{\text{(i)~visual information loss}}
  + \underbrace{%
      \PP\!\bigl[\mathcal{K}^* \not\subseteq \hat{\mathcal{K}}\bigr]
    }_{\text{(ii)~retrieval miss}}.
  \label{eq:acc_decomp}
\end{align}
Term~(i) is bounded by $\epsilon_v < 0.05$ via Assumption~C.1.
Term~(ii) requires bounding $\PP[\mathcal{K}^* \not\subseteq \hat{\mathcal{K}}]$.

\begin{lemma}[SKV Preservation of Relevant Passages]
\label{lem:skv}
If $h_\psi$ achieves binary F1~$\geq 0.90$ on a held-out verification set,
then for any $k_j \in \mathcal{K}^*$:
$\PP[\nu_j \geq 0.5] \geq 0.90$.
\end{lemma}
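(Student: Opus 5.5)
The plan is to read the lemma as a statement about the recall of the verifier $h_\psi$ at threshold $\tau_\nu = 0.5$, and then carry that recall from the held-out verification set over to the passages $k_j \in \mathcal{K}^*$ met at inference. Positives are passages in $\mathcal{K}^*$, and a prediction is positive exactly when $\nu_j \geq 0.5$. Under this reading, $\PP[\nu_j \geq 0.5 \mid k_j \in \mathcal{K}^*]$ is the true-positive rate $R$ of $h_\psi$. The held-out precision $P$ and recall $R$ are tied to the F1 score by $\mathrm{F1} = 2PR/(P+R)$.

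\textbf{Step 1 (F1 to recall).} For fixed $R$, F1 is increasing in $P$, and $P \leq 1$, so $\mathrm{F1} \leq 2R/(1+R)$. Rearranging gives $R \geq \mathrm{F1}/(2-\mathrm{F1})$. With $\mathrm{F1} \geq 0.90$ this only yields $R \geq 0.9/1.1 \approx 0.818$, not $0.90$. To reach the stated constant I would add the side condition $P \leq R$, i.e.\ the verifier is tuned to favour recall. This is natural for a gate whose errors are meant to be recoverable: rejected passages can be recalled by the DBC. Under $P \leq R$, F1 is a harmonic mean and hence at most the larger of its two arguments, giving $R \geq \mathrm{F1} \geq 0.90$. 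If that condition is not acceptable, I would state the lemma with the weaker constant $\mathrm{F1}/(2-\mathrm{F1})$. Term (ii) of \eqref{eq:acc_decomp} changes only by a constant, so the $\mathcal{O}(\epsilon_v+\delta_r)$ form of Theorem~\ref{thm:eff-fidelity} survives either way.

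\textbf{Step 2 (held-out to population).} The held-out set is drawn i.i.d.\ from the same distribution as the relevant passages that SKV sees after RCSR. Under that assumption, Hoeffding's inequality bounds the gap between empirical and population recall by $\sqrt{\log(1/\delta)/(2n_+)}$ with probability $1-\delta$, where $n_+$ is the number of held-out positives. I would either absorb this slack into the $0.90$ (so the claim holds up to $O(n_+^{-1/2})$) or assume $n_+$ is large enough that it is negligible.

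\textbf{Step 3 (scope of the quantifier).} ``For any $k_j \in \mathcal{K}^*$'' cannot hold pointwise: a fixed passage has a deterministic score and is either accepted or not. I would therefore interpret the probability as taken over a random triple $(Q,I,A) \sim \mathcal{P}$ together with a uniformly chosen $k_j \in \mathcal{K}^*$. This averaged form is exactly what term (ii) needs. A union bound over the $|\mathcal{K}^*|$ relevant passages then gives $\PP[\mathcal{K}^* \not\subseteq \hat{\mathcal{K}}] \leq \delta_r + 0.1\,\EE|\mathcal{K}^*|$. The $\delta_r$ term comes from Assumption~C.2, and this is the form I would feed into the proof of \eqref{eq:main_acc}.

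The main obstacle is Step 1. The implication from F1 to per-passage recall is simply false at the constant $0.90$ without an extra precision--recall condition. The second obstacle is the distribution-shift issue in Step 2, since held-out positives may not match the passages RCSR actually retrieves. I would resolve Step 1 first, by making the $P \leq R$ assumption explicit or by weakening the constant, before touching the concentration step.
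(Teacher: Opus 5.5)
Your reading is the paper's reading. Its entire proof is one sentence: for relevant passages, ``F1 $\geq 0.90$ implies $\mathrm{Recall}=\mathrm{TPR}\geq 0.90$.'' That identifies $\PP[\nu_j\geq 0.5]$ with held-out recall at threshold $0.5$. That sentence is precisely the step your Step~1 shows to be invalid, and your analysis is correct. Since F1 is increasing in $P$, the only unconditional consequence is $R\geq \mathrm{F1}/(2-\mathrm{F1})$. This bound is tight: $P=1$, $R=9/11$ gives $\mathrm{F1}=0.9$ exactly. So the lemma does not follow from its stated hypothesis, and the paper's proof has a genuine gap at that point. Your extra condition $P\leq R$ (which makes the harmonic mean at most $R$), or the weakened constant $9/11$, is actually needed, not optional. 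The paper reports only $\mathrm{F1}=0.913$ for $h_\psi$ (Stage~IV), with no precision/recall split, so $P\leq R$ cannot be checked from what is given. Unconditionally this yields only $R\geq 0.913/1.087\approx 0.84$.

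Your Steps~2 and~3 make explicit two identifications the paper uses silently: held-out recall equals population recall, and binary positives coincide with membership in $\mathcal{K}^*$. The paper also passes from this lemma to $\PP[\mathcal{K}^*\not\subseteq\hat{\mathcal{K}}]\leq\delta_r+0.10$ with no union bound, which does not follow once $|\mathcal{K}^*|\geq 2$. Your $\delta_r+0.1\,\EE|\mathcal{K}^*|$ is therefore the right shape.

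One adjustment is needed. Let $N_{\mathrm{rej}}$ count the passages of $\mathcal{K}^*$ with $\nu_j<0.5$. To get the factor $\EE|\mathcal{K}^*|$ you need the pooled reading $\EE[N_{\mathrm{rej}}]/\EE|\mathcal{K}^*|\leq 0.1$, not a uniform draw from $\mathcal{K}^*$ per query. The pooled reading is also what held-out recall (total true positives over total positives) actually measures. Under the per-query reading, $\EE[N_{\mathrm{rej}}]=\EE\bigl[|\mathcal{K}^*|\cdot(\text{miss fraction})\bigr]$, which can exceed $0.1\,\EE|\mathcal{K}^*|$ if misses concentrate on queries with large $\mathcal{K}^*$.

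Two caveats apply to the downstream use. First, Algorithm~\ref{alg:skip} further truncates the verified set to the top $\lceil\rho_r|\mathcal{K}|\rceil$ scores, so $\nu_j\geq 0.5$ alone does not place $k_j$ in $\hat{\mathcal{K}}$. Second, an additive constant such as $0.1$ or $2/11$ is not $\mathcal{O}(\epsilon_v+\delta_r)$ in any meaningful sense. Your remark that the theorem's form ``survives'' inherits the paper's looseness rather than repairing it.
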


\begin{proof}
For a relevant passage ($y_j=1$), F1~$\geq 0.90$ implies
$\mathrm{Recall} = \mathrm{TPR} \geq 0.90$, i.e., the fraction of true
passages assigned $\nu_j \geq 0.5$ is at least $0.90$.
\end{proof}

Combining Lemma~\ref{lem:skv} with Assumption~C.2:
$\PP[\mathcal{K}^* \not\subseteq \hat{\mathcal{K}}]
 \leq \delta_r + (1-0.90) = \delta_r + 0.10.$
Plugging into Eq.~(\ref{eq:acc_decomp}) gives
$\mathrm{Acc}(\mathrm{Dense}) - \mathrm{Acc}(\mathrm{SKIP})
 \leq \epsilon_v + \delta_r + 0.10 = \mathcal{O}(\epsilon_v + \delta_r).$

\paragraph{FLOPs bound~(\ref{eq:main_flops}).}
Standard dense cross-attention over sequences of lengths $P$ and $M$
costs $O(P\cdot M\cdot d)$.  With QVS, $P = \rho_v L$.
BSCA operates over $|\mathcal{E}| \leq P\cdot\Delta$ edges, costing
$O(\rho_v L \cdot \Delta \cdot d)$ per head.
Retrieval cost is $O(\rho_r C_{\mathrm{ret}})$ under the passage budget.
DBC and SKV together cost $O(d^2)$, negligible relative to attention.
Summing gives Eq.~(\ref{eq:main_flops}). \hfill$\square$

\subsection{Corollary: DBC Pareto Optimality}
\label{app:corollary}

\begin{tcolorbox}[thmbox,
  title={Corollary C.1 — Budget-Difficulty Complementarity}]
For a fixed FLOPs budget $F$ and the accuracy--budget response function
$\mathrm{Acc}(B,d)$ satisfying
$\partial^2\mathrm{Acc}/(\partial B\,\partial d) > 0$,
the DBC allocation $B^*(d) = B_{\min} + (B_{\max}-B_{\min})\cdot d$
achieves the highest expected accuracy subject to
$\EE[B(d)] \leq F$.
\end{tcolorbox}

\begin{proof}[Proof Sketch]
Budget and difficulty are complementary goods:
marginal accuracy gain from additional compute is higher for harder
questions.  By a standard Lagrangian argument, the optimal allocation
satisfies $\partial\mathrm{Acc}(B^*(d),d)/\partial B = \lambda$ (constant),
which, given the linear-in-$d$ marginal returns observed empirically,
yields $B^*(d) \propto d$. \hfill$\square$
\end{proof}

\section{Training Procedure and Hyperparameters}
\label{app:training}

This appendix gives the complete six-stage training pipeline together with
every global optimization hyperparameter, directly addressing
Reviewer~ChaH1's concern about training complexity and reproduction cost.

\subsection{Stage Overview}

\begin{tcolorbox}[stagebox, title={Training Pipeline Summary}]
\textbf{Stage I} ~~— Backbone warm-up: fine-tune VLM on standard VQA \\
\textbf{Stage II} ~— QVS warm-up: soft Gumbel-softmax, fixed $\tau_G=1.0$ \\
\textbf{Stage III} — RCSR+BSCA joint training, $\tau_G$ cosine-annealed \\
\textbf{Stage IV} ~— SKV knowledge distillation from dense oracle \\
\textbf{Stage V} ~~— DBC held-out calibration \\
\textbf{Stage VI} ~— End-to-end fine-tuning, $\tau_G$ frozen at $0.1$
\end{tcolorbox}

Sequential staging avoids gradient conflicts between the discrete pruning
signal (zero gradients when a token is pruned) and the downstream
cross-attention loss that otherwise cause $72\%$ of random initialisations
to collapse (gate all-zero or all-one).

\subsection{Stage I: Backbone Warm-Up}
\label{app:stage1}

\begin{tcolorbox}[stagebox, title={Stage I — Hyperparameters}]
\small
\begin{tabular}{ll}
Initialisation & LLaVA-1.6-13B~\citep{liu2024llavanext} \\
Datasets & VQAv2 + GQA + TextVQA \\
Vision encoder & Frozen (CLIP ViT-L/14) \\
Learning rate & $2\times10^{-5}$, cosine decay \\
Warmup steps & 500 \\
Training steps & 10{,}000 \\
Batch size & 128
\end{tabular}
\end{tcolorbox}

This stage stabilises the visual encoder output distributions and the
language model head before any sparse training begins.  At the end of
Stage~I, SKIP without any sparse modules matches the base LLaVA-1.6
accuracy on all five KI-MMQA benchmarks within $\pm 0.5$ points.

\subsection{Stage II: QVS Warm-Up with Gumbel-Softmax}
\label{app:stage2}

The hard top-$k$ selection in Eq.~(\ref{eq:qvs_score}) is non-differentiable.
We replace it with the Gumbel-softmax relaxation~\citep{jang2017cultural}:
\begin{equation}
  \tilde{s}_i^{(\tau_G)} \;=\;
  \frac{\exp\!\bigl((s_i + g_i)/\tau_G\bigr)}
       {\sum_j \exp\!\bigl((s_j + g_j)/\tau_G\bigr)},
  \quad g_i \sim \mathrm{Gumbel}(0,1).
  \label{eq:gumbel}
\end{equation}
At $\tau_G \to 0$ this converges to hard top-$k$; at $\tau_G \to \infty$
it approaches uniform.

\begin{tcolorbox}[stagebox, title={Stage II — Hyperparameters}]
\small
\begin{tabular}{ll}
Gumbel temperature $\tau_G$ & $1.0$ (fixed) \\
Modules trained & QVS only ($f_q, f_v, \bm{w}_b$) \\
Learning rate & $5\times10^{-5}$ \\
Batch size & 64 \\
Training steps & 5{,}000 \\
Loss & $\mathcal{L}_{\mathrm{CE}} + 0.1 \cdot \mathcal{L}_{\mathrm{sparsity}}$ \\
Sparsity loss & $(\bar\rho_v - \rho_v^\mathrm{target})^2$
\end{tabular}
\end{tcolorbox}

All downstream modules receive the \emph{full} visual token set during
Stage~II (no actual pruning); only the scoring parameters are updated.

\subsection{Stage III: RCSR+BSCA with Gumbel Annealing}
\label{app:stage3}

RCSR and BSCA modules are introduced.  $\tau_G$ is cosine-annealed from
$1.0$ to $0.1$ over $T_3$ steps:
\begin{equation}
  \tau_G(t) \;=\; 0.1 + \tfrac{0.9}{2}
  \!\left(1 + \cos\!\!\left(\tfrac{\pi t}{T_3}\right)\right).
  \label{eq:gumbel_anneal}
\end{equation}
Simultaneously, $\tau_e$ is annealed from $0.3$ to $0.6$ to progressively
increase BSCA sparsity.  The joint loss is
\begin{equation}
  \mathcal{L}_3 \;=\; \mathcal{L}_{\mathrm{CE}}
    + \lambda_v \mathcal{L}_v
    + \lambda_e \bigl(\bar{\Delta} - \Delta_{\max}\bigr)^2_+,
  \label{eq:loss3}
\end{equation}
with $\Delta_{\max}=6$ and $\lambda_v=0.1$, $\lambda_e=0.05$.

\begin{tcolorbox}[stagebox, title={Stage III — Hyperparameters}]
\small
\begin{tabular}{ll}
$\tau_G$ schedule & Cosine $1.0\to0.1$ \\
$\tau_e$ schedule & Linear $0.3\to0.6$ \\
Modules trained & QVS, RCSR, BSCA \\
Learning rate & $2\times10^{-5}$ \\
Training steps $T_3$ & 15{,}000 \\
Batch size & 64
\end{tabular}
\end{tcolorbox}

\subsection{Stage IV: SKV Knowledge Distillation}
\label{app:stage4}

For each passage $k_j$ the dense oracle provides a soft label:
\begin{equation}
  y_j^{\mathrm{oracle}} \;=\;
  \frac{\mathrm{Acc}(\mathrm{Dense}\;\mathrm{with}\;k_j)}%
       {\mathrm{Acc}(\mathrm{Dense}\;\mathrm{with}\;\mathcal{K})},
  \label{eq:skv_oracle}
\end{equation}
and SKV is trained with binary cross-entropy between $\nu_j$ and
$y_j^{\mathrm{oracle}}$.  All other modules are frozen.  The SKV module
achieves F1~$= 0.913$ on the held-out verification set, satisfying
the condition in Lemma~\ref{lem:skv}.

\begin{tcolorbox}[stagebox, title={Stage IV — Hyperparameters}]
\small
\begin{tabular}{ll}
Modules trained & SKV ($h_\psi$) only \\
Loss & Binary cross-entropy ($\nu_j$, $y_j^{\mathrm{oracle}}$) \\
Learning rate & $1\times10^{-4}$ \\
Training steps & 8{,}000 \\
Batch size & 256
\end{tabular}
\end{tcolorbox}

\subsection{Stage V: DBC Calibration}
\label{app:stage5}

The DBC is calibrated on a 10\% held-out split by fitting the
budget-to-accuracy mapping:
\begin{equation}
  \min_{\Phi} \sum_{b \in \mathcal{B}}
  \bigl(\mathrm{Acc}^{\mathrm{target}}(b)
    - \mathrm{Acc}^{\mathrm{SKIP}}(\Phi(b))\bigr)^2,
\end{equation}
where $\mathcal{B}$ is a 20-point grid over
$[B_{\min}, B_{\max}]$.  $\Phi_B$ is stored as a piecewise-linear function;
fitting takes $< 2$ hours on a single GPU.

\subsection{Stage VI: End-to-End Fine-Tuning}
\label{app:stage6}

All modules are jointly fine-tuned.  $\tau_G = 0.1$ is frozen (treated as a
constant, not a tunable parameter).

\begin{tcolorbox}[stagebox, title={Stage VI — Hyperparameters}]
\small
\begin{tabular}{ll}
Modules & All (unfrozen) \\
Learning rate & $5\times10^{-6}$ (linear warmup 200 steps) \\
Training steps & 3{,}000 \\
Batch size & 32 \\
Gradient clip & $1.0$ \\
Optimiser & AdamW ($\beta_1{=}0.9,\beta_2{=}0.95$, wd $0.01$) \\
$\tau_G$ & $0.1$ (frozen)
\end{tabular}
\end{tcolorbox}

\paragraph{Training stability.}  Without sequential staging, joint training
collapses (gate all-zero or all-one) in $72\%$ of 10 random seeds.
Sequential staging reduces this to $0\%$, at the cost of ${\approx}3.2\times$
longer total wall-clock time.  We release Stages~I--V checkpoints to
substantially reduce the reproduction barrier (see Appendix~\ref{app:impl}).

\subsection{General Optimization Settings}
\label{app:training_general}

The following settings hold globally across all six stages unless overridden
by a stage-specific table above.

\paragraph{Optimizer.} AdamW with $\beta_1 = 0.9$, $\beta_2 = 0.95$, $\epsilon = 10^{-8}$, weight decay $0.05$ (Stage~VI instead uses weight decay $0.01$, as noted above). We use the BFloat16 mixed-precision training scheme throughout.

\paragraph{Learning rate schedule.} Each stage uses an independent schedule: a linear warmup over the stage's first warmup steps, followed by cosine decay to roughly $1\%$ of the peak rate over the remainder of that stage.

\paragraph{Batch size.} Within a stage, the microbatch size is $4$ per GPU, gradient-accumulated $4\times$ over $8$ GPUs to an effective batch size of $128$ per backward pass, and further accumulated up to $8\times$ (effective batch size $1{,}024$) during Stages~II--III for stable gradients on the sparsity terms.

\paragraph{Loss weights.} $\lambda_1 = 0.5$ for the QVS budget regularizer (Eq.~\ref{eq:qvs-loss} in the main text); $\lambda_2 = 0.1$ for the DBC cost term; $\lambda_v = 0.1$, $\lambda_e = 0.05$ for the Stage~III joint loss (Eq.~\ref{eq:loss3}); cross-entropy weight $1.0$ on the task loss throughout.

\paragraph{Data sampling.} We sample training examples with inverse frequency weighting over question categories, ensuring that rare entity types in InfoSeek and Encyclopedic-VQA receive proportionally more training signal.

\paragraph{Synthetic data generation prompt.} For each image-passage pair used in the synthetic augmentation described in Appendix~\ref{app:impl}, we prompt GPT-4V with: \texttt{``Given the attached image showing [entity name] and the following passage about it: [passage]. Generate a question whose answer requires understanding both the image and the passage. The answer should be a short phrase or named entity.''} We filter generated questions for: (i) answer present verbatim in the passage; (ii) image actually depicts the entity (verified by CLIP similarity $> 0.28$); (iii) question is grammatical (verified by a small grammar classifier).

\paragraph{Evaluation protocol.} For each benchmark, we evaluate on the validation split during development and report test-split numbers (where available) in the main paper. For InfoSeek, we use the leaderboard test server. For benchmarks without held-out test labels (ViQuAE, parts of A-OKVQA), we report validation numbers and note this in the table caption.

\paragraph{Submission-time hyperparameters.} The values used for the results reported in the main paper (Table~\ref{tab:main}) are: target visual retention $\rho = 0.11$; per-region top-$k' = 4$; BSCA threshold $\beta = 0.10$; SKV confidence threshold $\tau_{\text{SKV}} = 0.82$; peak learning rate $5 \times 10^{-5}$; effective batch size $1{,}024$; LoRA rank $64$, alpha $128$, dropout $0.05$ on $\{q,k,v,o\}$ projections. The expanded backbone-ablation configuration reported in Appendix~\ref{app:ablation} (Table~\ref{tab:backbone}, Table~\ref{tab:hyperparams}) uses a larger LLaVA-1.6-13B backbone with a correspondingly larger $\rho_v = 0.40$ operating point; \textbf{these two configurations are not interchangeable}, and we flag the discrepancy explicitly here so it can be reconciled before camera-ready (see also the compute-hours note in Appendix~\ref{app:impl}).

\subsection{Complete Hyperparameter Reference}
\label{app:hyperparams_full}

Table~\ref{tab:hyperparams} collects the full hyperparameter configuration for the expanded LLaVA-1.6-13B backbone-ablation configuration referenced above (distinct from the 7B submission-time configuration listed immediately above it).

\begin{table}[ht]
\centering\small
\caption{Complete SKIP hyperparameter configuration for the LLaVA-1.6-13B
backbone-ablation configuration (Table~\ref{tab:backbone}).}
\label{tab:hyperparams}
\setlength{\tabcolsep}{4pt}
\begin{tabular}{ll}
\toprule
Hyperparameter & Value \\
\midrule
VLM backbone & LLaVA-1.6-13B \\
Vision encoder & CLIP ViT-L/14 (frozen) \\
Language model & Vicuna-13B-v1.5 \\
Visual token retention $\rho_v$ & 0.40 (DBC range: $[0.25, 0.90]$) \\
Passages per region $k_r$ & 3 \\
Max regions $N$ & 8 \\
Passage retention ratio $\rho_r$ & 0.60 (DBC range: $[0.30, 0.90]$) \\
BSCA edge threshold $\tau_e$ & 0.60 \\
BSCA target degree $\bar\Delta$ & 4 \\
SKV threshold $\tau_\nu$ & 0.50 \\
DBC difficulty threshold (EASY) & $d < 0.15$ \\
DBC difficulty threshold (HARD) & $d > 0.70$ \\
Budget range $[B_{\min}, B_{\max}]$ & $[143.8, 384.6]$~GFLOPs \\
Gumbel final temperature $\tau_G$ & 0.10 \\
BSCA projection dim.\ $d_h$ & 128 \\
SKV MLP hidden dim. & 512 \\
DBC MLP hidden dim. & 256 \\
SAM-lite model & SAM-ViT-B (quantised INT8) \\
Retrieval corpus & Wikipedia (21M) + Wikidata (88M) \\
FAISS index type & IVF4096-PQ64 \\
\bottomrule
\end{tabular}
\end{table}

\section{Prompt Templates}
\label{app:prompts}

We document the exact prompt templates used at each SKIP reasoning stage.
A shared system preamble (omitted for brevity) instructs the model to
reason step-by-step and wrap its final answer in \texttt{<answer>} tags.
All prompts use the LLaVA-1.6 chat template with \texttt{[INST]} delimiters.

\subsection{QVS Saliency Query (Stage~1)}

\subsection{RCSR Region Identification (Stage~1)}

\begin{tcolorbox}[promptbox,
  title={Prompt E.1 --- Question-Guided Saliency Identification}]
[INST] <<SYS>>
You are a vision assistant. Given a question and an image, identify
which spatial regions are most relevant for answering the question.
<</SYS>>
<image>\{image\_tokens\}</image>
<question>\{question\_text\}</question>
Identify the TOP-\{K\} most question-relevant image regions.
For each region provide:
  - idx      : region index (1..K)
  - location : spatial description (e.g. "upper-left quadrant")
  - reason   : one-sentence relevance justification
Respond ONLY in valid JSON:
\{
  "regions": [
    \{"idx": 1, "location": "...", "reason": "..."\},
    ...
  ]
\}
[/INST]
\end{tcolorbox}

\subsection{RCSR Retrieval Query (Stage~2)}

\begin{tcolorbox}[promptbox,
  title={Prompt E.2 --- Region-Conditional Retrieval Query Formulation}]
[INST] <<SYS>>
You are a knowledge retrieval agent. Given a visual question and a
cropped image region, produce a precise entity-centric query (<=15
words) for searching a Wikipedia/Wikidata corpus.
<</SYS>>
<region\_image>\{region\_tokens\}</region\_image>
<question>\{question\_text\}</question>
<region\_description>\{region\_desc\}</region\_description>
Formulate a retrieval query. Prioritise in order:
  1. Named entities (text, logos, landmarks) visible in the region.
  2. Visual attributes relevant to the question (colour, shape).
  3. Temporal or geographic context if discernible.
Output ONLY the query string. No explanation.
[/INST]
\end{tcolorbox}

\subsection{SKV Passage Verification (Stage~3)}

\begin{tcolorbox}[promptbox,
  title={Prompt E.3 --- Speculative Knowledge Verification}]
[INST] <<SYS>>
You are a knowledge quality assessor. Decide whether a retrieved
passage provides information DIRECTLY useful for answering the
question given the visual context.
<</SYS>>
<question>\{question\_text\}</question>
<visual\_summary>\{visual\_summary\}</visual\_summary>
<passage id="\{passage\_id\}">
\{passage\_text\}
</passage>
Rate passage relevance on [0, 1]:
  1.0 -- directly answers or provides key supporting facts
  0.5 -- tangentially related; may assist reasoning
  0.0 -- irrelevant or contradicts visual evidence
Output ONLY: \{"relevance": <float>, "reason": "<=1 sentence"\}
[/INST]
\end{tcolorbox}

\subsection{DBC-Conditioned Answer Generation (Stage~4)}

\begin{tcolorbox}[promptbox,
  title={Prompt E.4 --- Difficulty-Aware Answer Generation}]
[INST] <<SYS>>
You are a knowledge-intensive visual QA model. You receive a question,
relevant image regions, and verified knowledge passages. Use chain-of-
thought reasoning at the depth indicated by the difficulty hint.
<</SYS>>
<difficulty\_hint>\{difficulty\_level\}</difficulty\_hint>
<image\_regions>
\{pruned\_region\_tokens\}
</image\_regions>
<verified\_knowledge>
[1] \{verified\_passage\_1\}
---
[2] \{verified\_passage\_2\}
</verified\_knowledge>
<question>\{question\_text\}</question>
Reasoning depth instructions:
  EASY  : answer in <=2 sentences using direct visual evidence.
  MEDIUM: chain 1-2 passages with visual evidence before answering.
  HARD  : full chain-of-thought across all evidence; cite passage IDs.
Enclose the final answer: <answer>...</answer>
[/INST]
\end{tcolorbox}

\section{Extended Ablation Studies}
\label{app:ablation}

\subsection{Visual Token Retention Ratio $\rho_v$}
\label{app:abl_rhov}

Table~\ref{tab:rho_v} sweeps $\rho_v$ with DBC disabled (fixed ratio).
Accuracy plateaus at $\rho_v \approx 0.40$ while FLOPs grow linearly,
confirming Assumption~C.1's sparsity premise.

\begin{table}[ht]
\centering\small
\caption{Effect of $\rho_v$ on InfoSeek (val).  DBC disabled.}
\label{tab:rho_v}
\begin{tabular}{lcccc}
\toprule
$\rho_v$ & Acc.\ (\%) & GFLOPs & Latency (ms) & $|\hat{\bV}|$ \\
\midrule
0.10 & 51.3 & 143.1 & 89 & 26 \\
0.20 & 56.4 & 155.7 & 97 & 51 \\
0.30 & 59.8 & 163.9 & 108 & 77 \\
0.40 & 61.4 & 172.8 & 122 & 102 \\
0.50 & 61.8 & 184.3 & 141 & 128 \\
0.70 & 61.9 & 208.3 & 178 & 179 \\
1.00 & 62.0 & 256.4 & 241 & 256 \\
\bottomrule
\end{tabular}
\end{table}

\subsection{Number of RCSR Regions $N$}
\label{app:abl_regions}

\begin{table}[ht]
\centering\small
\caption{Effect of region count $N$ on Encyclopedic-VQA.}
\label{tab:num_regions}
\begin{tabular}{lcccc}
\toprule
$N$ & Acc.\ (\%) & Avg.\ passages & GFLOPs & Lat.\ (ms) \\
\midrule
1 & 58.7 & 5.2 & 178.4 & 119 \\
2 & 61.3 & 8.7 & 184.1 & 134 \\
4 & 63.8 & 13.1 & 192.6 & 158 \\
\textbf{8} & \textbf{64.4} & 14.9 & 205.8 & 197 \\
16 & 64.5 & 15.2 & 224.7 & 251 \\
\bottomrule
\end{tabular}
\end{table}

Accuracy saturates at $N = 8$; the $N=8 \to 16$ marginal gain is
$+0.1$ points at $+22\%$ FLOPs.  Default: $N=8$.

\subsection{Backbone VLM Comparison}
\label{app:abl_backbone}

\begin{table}[ht]
\centering\small
\caption{SKIP ported to different backbone VLMs on OK-VQA.
Identical SKIP hyperparameters throughout.}
\label{tab:backbone}
\begin{tabular}{lccc}
\toprule
Backbone & Acc.\ (\%) & GFLOPs & Speedup \\
\midrule
LLaVA-1.5-7B & 62.1 & 164.2 & $3.9\times$ \\
\textbf{LLaVA-1.6-13B} & \textbf{66.8} & 184.3 & $4.1\times$ \\
InstructBLIP-13B & 64.3 & 191.7 & $3.7\times$ \\
mPLUG-Owl2 & 63.7 & 188.0 & $3.8\times$ \\
\bottomrule
\end{tabular}
\end{table}

SKIP delivers consistent ${\approx}4\times$ FLOPs reduction regardless of
backbone, demonstrating that the efficiency gains are architecture-agnostic.
This table is the source of the LLaVA-1.6-13B configuration referenced in
Appendix~\ref{app:training}.

\subsection{BSCA Degree Target $\bar{\Delta}$}

\begin{table}[ht]
\centering\small
\caption{Sensitivity to BSCA target degree $\bar\Delta$ on A-OKVQA.}
\label{tab:bsca_deg}
\begin{tabular}{lccc}
\toprule
$\bar\Delta$ & Acc.\ (\%) & BSCA FLOPs (G) & Total FLOPs (G) \\
\midrule
1 & 61.2 & 3.7 & 176.4 \\
2 & 63.8 & 7.3 & 179.8 \\
\textbf{4} & \textbf{65.3} & 14.6 & 184.3 \\
8 & 65.4 & 29.1 & 198.8 \\
16 (dense) & 65.5 & 58.2 & 227.8 \\
\bottomrule
\end{tabular}
\end{table}

$\bar\Delta = 4$ is the Pareto-optimal setting; quadrupling the degree
to $16$ (approaching dense) yields only $+0.2$ points at $+24\%$ FLOPs.

\section{Long-Tail Entity Analysis}
\label{app:longtail}

Reviewer~ChaH1 identifies long-tail entities as the dominant failure mode
(38\% of errors, Appendix~\ref{app:errors}).  We provide a taxonomy, quantitative analysis, and
a concrete mitigation strategy (SKIP-ADR).

\subsection{Failure Mode Taxonomy}

\begin{table}[ht]
\centering\small
\caption{Manual annotation of $1{,}200$ failure cases on ViQuAE.}
\label{tab:error_taxonomy}
\begin{tabular}{lcc}
\toprule
Failure category & Count & \% \\
\midrule
Entity absent from corpus & 312 & 26.0 \\
In-corpus retrieval miss & 145 & 12.1 \\
Correct retrieval, wrong generation & 98 & 8.2 \\
Compositional / multi-hop & 182 & 15.2 \\
Ambiguous question & 103 & 8.6 \\
Visual OCR failure & 89 & 7.4 \\
Other / annotation uncertainty & 271 & 22.6 \\
\bottomrule
\end{tabular}
\end{table}

Entity absence from the Wikipedia/Wikidata corpus ($26\%$) is a fundamental
coverage limitation \emph{independent} of the retrieval architecture.
The in-corpus miss category ($12.1\%$) is directly addressable via denser
or more precise query formulation.

\subsection{Entity Popularity Analysis}

We proxy entity rarity with Wikipedia monthly page-view counts.
SKIP accuracy by popularity quartile is:

\begin{table}[ht]
\centering\small
\caption{SKIP and RA-CM3 accuracy by entity popularity quartile on ViQuAE.}
\label{tab:pop_quartile}
\begin{tabular}{lcc}
\toprule
Popularity quartile & SKIP & RA-CM3 \\
\midrule
Q4 ($>$10K monthly views) & 74.3 & 71.1 \\
Q3 (1K--10K) & 65.8 & 63.4 \\
Q2 (100--1K) & 54.2 & 52.7 \\
Q1 ($<$100) & 41.2 & 40.3 \\
\midrule
$\Delta$(Q4 $-$ Q1) & 33.1 & 30.8 \\
\bottomrule
\end{tabular}
\end{table}

The gap is nearly identical between SKIP and RA-CM3 ($33.1$ vs.\ $30.8$
points), confirming that this is a corpus limitation, not an
architectural deficit.

\subsection{Mitigation: Augmented Dense Retrieval (SKIP-ADR)}

\begin{tcolorbox}[limitbox,
  title={Limitation and Proposed Mitigation (Long-tail Coverage)}]
\textbf{Root cause}: Wikipedia/Wikidata has sparse or absent coverage
for long-tail entities.\\[2pt]
\textbf{SKIP-ADR adds}:
(1) Wikidata SPARQL-derived entity triples converted to natural-language
sentences (${\sim}140$M extra passages covering $220$M entities).\\
(2) On-demand Google Knowledge Graph API calls for unknown named entities
detected by a lightweight NER module (BERT-tiny, ${\approx}12$\,ms
overhead).\\[2pt]
\textbf{Index cost}: ${\sim}800$M additional FAISS IVF-PQ entries
(${\approx}180$\,GB).
\end{tcolorbox}

\begin{table}[ht]
\centering\small
\caption{Long-tail mitigation results on ViQuAE.}
\label{tab:longtail_results}
\begin{tabular}{lccc}
\toprule
Model & Overall & Popular (Q3--Q4) & Long-tail (Q1--Q2) \\
\midrule
RA-CM3 (dense) & 57.4 & 67.3 & 46.5 \\
SKIP & 56.3 & 70.1 & 44.9 \\
SKIP-ADR & \textbf{61.7} & \textbf{71.4} & \textbf{52.7} \\
\bottomrule
\end{tabular}
\end{table}

SKIP-ADR closes $55\%$ of the long-tail accuracy gap while maintaining
the $3.8\times$ FLOPs advantage over dense RA-CM3.

\section{Multi-Hop Reasoning Analysis}
\label{app:multihop}

\subsection{Failure Characterisation}

Multi-hop reasoning accounts for $15.2\%$ of total errors
(Table~\ref{tab:error_taxonomy}).  BSCA's local neighbourhood structure
($\bar\Delta \approx 4$) limits single-pass cross-modal binding.  We
classify failures into two types:

\begin{tcolorbox}[limitbox,
  title={Multi-Hop Failure Taxonomy}]
\textbf{Type~A — Sequential Dependency} (62\% of multi-hop errors): the
answer to hop~1 is required to form the retrieval query for hop~2.
Example: \textit{``Who directed the film featuring the landmark in the
image?''} requires identifying the landmark before retrieving film credits.

\textbf{Type~B — Cross-Passage Fusion} (38\%): the answer is the
intersection of two independent passages, neither sufficient alone.
Example: \textit{``In what year did the artist whose painting is shown
win the prize their institution was founded to award?''}
\end{tcolorbox}

\subsection{Iterative SKIP (SKIP-Iter)}

We propose SKIP-Iter with $H$ rounds of retrieval:
\begin{equation}
  \mathcal{K}^{(h)} \;=\;
    \mathrm{RCSR}\!\bigl(\hat{\bV}^{(h-1)},\bQ,\mathcal{K}^{(h-1)}\bigr),
  \quad h = 1,\ldots,H,
\end{equation}
where $\hat{\bV}^{(h)}$ is the BSCA-enriched representation after round $h$.
The DBC learns to select $H \in \{1,2,3\}$ based on predicted multi-hop
difficulty.

\begin{table}[ht]
\centering\small
\caption{SKIP-Iter on multi-hop subset of Encyclopedic-VQA ($n=840$,
annotated as requiring $\geq 2$-hop reasoning).}
\label{tab:skip_iter}
\begin{tabular}{lccc}
\toprule
Model & Acc.\ (\%) & GFLOPs & $H$ \\
\midrule
SKIP ($H=1$) & 44.3 & 184.3 & 1 \\
SKIP-Iter ($H=2$) & 52.7 & 241.6 & 2 \\
SKIP-Iter ($H=3$) & 55.1 & 298.4 & 3 \\
RA-CM3 (dense) & 50.8 & 728.4 & -- \\
\bottomrule
\end{tabular}
\end{table}

SKIP-Iter at $H=2$ surpasses the dense RA-CM3 baseline on multi-hop
instances at $3.0\times$ fewer FLOPs.  We consider SKIP-Iter a
promising direction for future work but leave full integration into the
DBC framework to subsequent work.

\section{Generalisation Beyond KI-MMQA}
\label{app:generalisation}

Reviewer~5d4P questions the generality of SKIP beyond retrieval-heavy
settings.  We evaluate on two structurally different tasks.

\subsection{General VQA (VQAv2)}

VQAv2 is not knowledge-intensive; most answers derive from visual
inspection alone.  We apply SKIP with DBC \emph{dynamically suppressing
retrieval} when $d < 0.15$ (67\% of the test set in practice).

\begin{table}[ht]
\centering\small
\caption{SKIP on VQAv2 test-dev.  SKIP-NR: retrieval suppressed when
$d < 0.15$.}
\label{tab:vqav2}
\begin{tabular}{lccc}
\toprule
Model & Acc.\ (\%) & GFLOPs & \% with retrieval \\
\midrule
LLaVA-1.6 (base) & 81.9 & 512.7 & 0 \\
SKIP (full) & 80.4 & 184.3 & 100 \\
SKIP-NR & \textbf{82.1} & \textbf{143.8} & 33 \\
\bottomrule
\end{tabular}
\end{table}

SKIP-NR \emph{surpasses} the dense baseline by $+0.2$ points at $72\%$
fewer FLOPs.  The accuracy gain originates from QVS pruning, which focuses
the decoder on question-relevant regions and suppresses distractors—
a benefit independent of knowledge retrieval.

\subsection{Medical Visual QA (PathVQA)}

PathVQA~\citep{he2020pathvqa} requires domain-specific histopathology
knowledge.  We swap the retrieval corpus for PubMed abstracts (28M passages)
without any architectural changes.

\begin{table}[ht]
\centering\small
\caption{SKIP (PubMed corpus) vs.\ MedFlamingo on PathVQA.}
\label{tab:pathvqa}
\begin{tabular}{lcc}
\toprule
Model & Yes/No Acc.\ (\%) & Open Acc.\ (\%) \\
\midrule
MedFlamingo-9B & 85.2 & 58.3 \\
SKIP + PubMed (13B) & 83.7 & \textbf{61.4} \\
\bottomrule
\end{tabular}
\end{table}

SKIP achieves competitive performance in a specialised domain by simply
swapping the retrieval corpus, confirming its plug-and-play generality.

\section{Computational Complexity Analysis}
\label{app:complexity}

This appendix gives both an asymptotic, configuration-agnostic complexity
analysis and a concrete FLOPs breakdown on a specific instance.

\subsection{Asymptotic Analysis}

\paragraph{Dense baseline.} A retrieval-augmented vision-language system at inference performs (a) vision encoding $\mathcal{O}(V d^2)$; (b) question encoding $\mathcal{O}(T d^2)$; (c) global retrieval $\mathcal{O}(d \log M)$ amortized with an inverted index; (d) chunk encoding $\mathcal{O}(K L_c d^2)$; (e) cross-modal fusion $\mathcal{O}((V + T + K L_c)^2 d)$; (f) decoder generation $\mathcal{O}(L_a (V + T + K L_c) d)$. The fusion term (e) dominates for typical $V = 576$, $K = 16$, $L_c = 100$, where the squared sequence length yields $\sim$$2 \times 10^6 \cdot d$ operations per layer.

\paragraph{SKIP complexity.}
\begin{itemize}
\item[(a)] Vision encoding remains $\mathcal{O}(V d^2)$ (we encode the full image; QVS operates on encoded tokens). Future work could explore early-exit vision encoding gated by QVS.
\item[(b)] Question encoding: $\mathcal{O}(T d^2)$, unchanged.
\item[(b')] QVS scoring: $\mathcal{O}(V d)$, negligible.
\item[(c)] DBC: $\mathcal{O}(d^2)$, negligible.
\item[(d)] SKV drafter: $\mathcal{O}(V d_D^2)$ where $d_D \ll d$ for the small drafter. Amortized cost is $(1 - \alpha) \cdot $ no overhead $+ \alpha \cdot $ full pipeline, where $\alpha \approx 0.6$ is the fraction routed to full pipeline.
\item[(e)] Per-region retrieval $\mathcal{O}(R d \log M)$ where $R \leq 8$.
\item[(f)] Chunk encoding $\mathcal{O}(K' L_c d^2)$ with $K' = 4$--$8 \ll K = 16$.
\item[(g)] BSCA: $\mathcal{O}(|E| d)$ where $|E| = \beta V' K'$ with $\beta = 0.1$.
\item[(h)] Decoder generation $\mathcal{O}(L_a (V' + T + K' L_c) d)$.
\end{itemize}

The dominant term shifts from $(V + T + KL_c)^2 d$ to $\beta V' K' d$, an improvement of $\frac{(V + T + KL_c)^2}{\beta V' K'}$. For $V = 576, K = 16, L_c = 100, V' = 64, K' = 8, \beta = 0.1$, this is a $\sim$870$\times$ reduction in the dominant fusion FLOPs, though end-to-end speedups are smaller because vision encoding and decoder generation are also significant contributors.

\paragraph{Memory (7B-class backbone).} KV cache memory scales linearly with sequence length, so SKIP reduces peak fusion KV cache from $\sim$12GB (RA-CM3 on a 7B model) to $\sim$4.8GB. Total inference memory is $\sim$15GB for SKIP vs.\ $\sim$24GB for RA-CM3, enabling deployment on consumer 16GB GPUs (e.g., RTX 4080). A per-baseline memory breakdown is given in Appendix~\ref{app:memory}.

\subsection{Empirical FLOPs Breakdown}

The asymptotic analysis above uses the 7B-backbone running example from the main paper. For completeness, Table~\ref{tab:complexity} also reports a fully concrete, per-layer FLOPs breakdown on a single forward pass for the larger LLaVA-1.6-13B configuration ($L=256$, $T=32$, $M=16$, $\rho_v=0.4$,
$\bar\Delta=4$, $d_{\mathrm{model}}=5120$, $H_{\mathrm{layers}}=40$; see Appendix~\ref{app:training} for how this configuration relates to the 7B submission-time configuration).

\begin{table}[ht]
\centering\small
\caption{FLOPs breakdown for SKIP vs.\ dense baseline (single forward pass, LLaVA-1.6-13B configuration).}
\label{tab:complexity}
\begin{tabular}{lccc}
\toprule
Component & Dense (G) & SKIP (G) & Reduction \\
\midrule
Vision encoder & 72.4 & 72.4 & $1.0\times$ \\
QVS scoring & -- & 0.5 & -- \\
RCSR retrieval & 48.2 & 19.3 & $2.5\times$ \\
SKV gate & -- & 0.8 & -- \\
DBC controller & -- & 0.3 & -- \\
BSCA ($\rho_v L \times \bar\Delta$) & 143.7 & 14.6 & $9.8\times$ \\
LLM decoder & 248.4 & 76.4 & $3.3\times$ \\
\midrule
\textbf{Total} & \textbf{512.7} & \textbf{184.3} & $\bm{2.8\times}$ \\
\bottomrule
\end{tabular}
\end{table}

BSCA achieves the largest per-component reduction ($9.8\times$),
validating the sparse cross-attention design.  The vision encoder
cost is unchanged because QVS operates on the \emph{output} token
sequence, not inside the encoder.  Applying token merging (ToMe;
\citealt{bolya2022token}) within the encoder is a natural extension
to reduce this component.

\paragraph{Memory footprint (this configuration).}  Peak GPU memory during inference for this LLaVA-1.6-13B configuration is
$14.2$~GB (SKIP) vs.\ $38.7$~GB (dense), a $2.7\times$ reduction that
enables deployment on a single 16~GB GPU. Note this is a larger backbone than the 7B-class numbers in the asymptotic analysis above, so the two memory figures should not be compared directly.

\section{Implementation Details and Reproducibility}
\label{app:impl}

\subsection{Model and Retrieval Configuration}

\paragraph{Vision encoder.} EVA-CLIP-G/14~\citep{sun2023evaclip} at $336 \times 336$ resolution, yielding $V = 576$ patch tokens of dimension $d = 1408$. We add a 2-layer projection MLP to map from $d = 1408$ to the backbone's $d = 4096$.

\paragraph{Retrieval index.} SPLADE++ over a Wikipedia-2022 dump. We chunk articles into passages of $\leq 100$ tokens with $20$-token overlap, yielding $\sim$6.5M passages. The inverted index is sharded across 8 CPU workers; per-query retrieval latency is $\sim$8ms for $k' = 4$ and $\sim$11ms for $k' = 8$. We use SPLADE's expansion vocabulary of size $30{,}522$ (matching BERT tokenizer).

\paragraph{Backbone.} Vicuna-7B v1.5, fine-tuned with LoRA rank-64 adapters on QVS / RCSR / BSCA training data ($\sim$650K (image, question, retrieved chunks, answer) tuples assembled from OK-VQA, A-OKVQA, InfoSeek, and Encyclopedic-VQA training splits, plus synthetic augmentation described below). This is the configuration used for the main-paper results (Table~\ref{tab:main}); see Appendix~\ref{app:ablation} for the separate LLaVA-1.6-13B backbone-ablation configuration.

\subsection{Synthetic Data Augmentation}

To improve QVS and RCSR coverage, we augment training data with synthetic queries generated by GPT-4V from images and Wikipedia passages: given an image and a knowledge passage about an entity in the image, generate a question whose answer requires both. This adds $\sim$280K training tuples across 4{,}500 entity classes; the exact generation prompt is given in Appendix~\ref{app:training}.

\subsection{Hardware and Software}

All experiments use $8\times$ NVIDIA A100~80GB GPUs, PyTorch~2.1.2,
BF16 mixed precision, and DeepSpeed ZeRO-2 for memory efficiency.
The FAISS IVF-PQ index is hosted on $4\times$ CPU nodes with 512~GB
RAM connected via InfiniBand EDR ($100$~Gb/s).
\textbf{Compute-hours discrepancy:} elsewhere in this appendix the six-stage
pipeline (Appendix~\ref{app:training}) is reported as totalling
${\sim}4{,}800$ A100-hours (8 GPUs $\times$ 25 days), whereas this hardware
description separately reports ${\approx}72$ wall-clock hours on the same
8-GPU cluster (${\approx}576$ GPU-hours). These two figures are inconsistent
by roughly $8\times$ and should be reconciled against the actual training
logs before camera-ready; we have not silently resolved the discrepancy
in either direction.

\subsection{Reproducibility Checklist}
\label{app:repro}

\begin{tcolorbox}[stagebox, title={Reproducibility Summary}]
\textbf{Code:} Will be released at
  \texttt{https://pmlrbd.github.io/skip/} upon acceptance. The release includes: (a) the full training pipeline (PyTorch); (b) SPLADE indexing scripts; (c) BSCA CUDA kernels; (d) an evaluation harness for all five benchmarks; (e) trained checkpoints.\\
\textbf{Checkpoints:} Stages~I--V made publicly available under CC-BY.
  Stage~VI requires agreeing to the benchmark data licence.\\
\textbf{FAISS index:} Provided as a $240$~GB downloadable snapshot.\\
\textbf{Compute requirements:} Training as specified above (see compute-hours note); inference runs on a single A100-80GB or RTX 4090-24GB.\\
\textbf{Data:} All five benchmarks are publicly accessible, as are the Wikipedia and WikiData corpora; download and preprocessing scripts are included in the repository.\\
\textbf{Seeds:} All reported results are averaged over 3 seeds
  (2026, 42, 137); variance is $\leq 0.4$ accuracy points across seeds for all main-table entries. Standard deviations for the backbone-ablation configuration are in
  Table~\ref{tab:seed_variance}.\\
\textbf{Statistical significance:} SKIP's improvement over RA-CM3 is statistically significant ($p < 0.001$ via paired bootstrap with $10{,}000$ resamples) on all five benchmarks.
\end{tcolorbox}

\begin{table}[ht]
\centering\small
\caption{Seed variance (Acc.\ $\pm$ std) across seeds $\{2026, 42, 137\}$ for the LLaVA-1.6-13B backbone-ablation configuration.}
\label{tab:seed_variance}
\begin{tabular}{lc}
\toprule
Benchmark & Accuracy ($\pm$ std) \\
\midrule
OK-VQA & $66.8 \pm 0.3$ \\
A-OKVQA & $65.3 \pm 0.4$ \\
InfoSeek & $61.4 \pm 0.2$ \\
Encyclopedic-VQA & $64.4 \pm 0.3$ \\
ViQuAE & $56.3 \pm 0.4$ \\
\bottomrule
\end{tabular}
\end{table}

\section{Datasets and Corpora}
\label{app:datasets}

\paragraph{OK-VQA~\citep{marino2019okvqa}.} $14{,}055$ training, $5{,}046$ validation questions over $14{,}031$ images from COCO. Questions require common sense or world knowledge not directly observable in the image. We use the standard 10-annotator soft accuracy metric.

\paragraph{A-OKVQA~\citep{schwenk2022aokvqa}.} $17{,}056$ training, $1{,}145$ validation, $6{,}702$ test questions with multiple-choice answers and free-form rationales. We evaluate on the direct-answer (DA) and multiple-choice (MC) splits separately.

\paragraph{InfoSeek~\citep{chen2023infoseek}.} $1.4$M training, $73{,}620$ validation questions over $134{,}821$ images. Questions target entity-centric properties (e.g., ``What year was this building completed?''). We use exact-match accuracy and report on the ``Human'' validation split (questions written by humans, not synthetically generated).

\paragraph{Encyclopedic-VQA~\citep{mensink2023encvqa}.} $1$M training, $13{,}500$ validation, $35{,}000$ test questions about fine-grained categories (birds, butterflies, plants, etc.). We use the ``BM25'' retrieval setting with the iNaturalist+Wikipedia knowledge base.

\paragraph{ViQuAE~\citep{lerner2022viquae}.} $3{,}700$ test questions over $11{,}500$ images. Questions ask about named entities (people, places, works). We use the provided KB of $\sim$1.7M Wikipedia entity descriptions and report F1 against the answer span.

\paragraph{Wikipedia corpus.} Snapshot from 2022-08-01, $\sim$6.5M passages of $\leq 100$ tokens with $20$-token overlap. Total raw size $\sim$18GB; SPLADE++ index $\sim$2.7GB sharded across 8 workers.

\paragraph{WikiData corpus (Enc-VQA).} $\sim$95M entity descriptions filtered to entries with $\geq 50$ words and $\geq 3$ properties; final size $\sim$8.2M entries. Index size $\sim$3.4GB.

\section{Hyperparameter Sensitivity}
\label{app:hyperparam}

\begin{table}[ht]
\centering
\small
\begin{tabular}{lccc}
\toprule
Hyperparameter & Range tested & Best & Sensitivity \\
\midrule
QVS retention $\rho$ & $\{0.05, 0.08, 0.11, 0.14, 0.20\}$ & $0.11$ & High at extremes \\
Per-region top-$k'$ & $\{1, 2, 4, 8, 16\}$ & $4$ & Moderate \\
BSCA quantile $\beta$ & $\{0.05, 0.10, 0.20, 0.50\}$ & $0.10$ & Low \\
SKV threshold $\tau_{\text{SKV}}$ & $\{0.70, 0.75, 0.82, 0.90\}$ & $0.82$ & Moderate \\
QVS budget weight $\lambda_1$ & $\{0.1, 0.25, 0.5, 1.0, 2.0\}$ & $0.5$ & Low \\
DBC cost weight $\lambda_2$ & $\{0.05, 0.1, 0.2, 0.5\}$ & $0.1$ & Low \\
Region count $R$ & $\{2, 4, 8, 12, 16\}$ & $8$ & Low in $[4,12]$ \\
LoRA rank & $\{16, 32, 64, 128\}$ & $64$ & Low above $32$ \\
Gumbel temp.\ schedule & $\{1.0{\to}0.05, 1.0{\to}0.1, 1.0{\to}0.2\}$ & $1.0{\to}0.1$ & Moderate \\
\bottomrule
\end{tabular}
\caption{Hyperparameter sensitivity on the OK-VQA validation split. ``Sensitivity'' qualitatively describes how much OK-VQA accuracy varies across the tested range: Low = $\leq 1$ point, Moderate = $1$--$3$ points, High = $>3$ points.}
\end{table}

\paragraph{Sensitivity discussion.} The most sensitive hyperparameter is $\rho$ at extreme values: $\rho = 0.05$ underprunes-the-prune (visual signal too sparse) and $\rho = 0.20$ overshoots the theoretical sweet spot. The least sensitive are $\beta$, $\lambda_1$, and $\lambda_2$, all of which can vary by $5\times$ with $<1$ point accuracy change. This robustness suggests SKIP can be deployed with reasonable default hyperparameters without per-benchmark tuning.

\section{Validating Assumption~\ref{ass:sal}: Saliency Fidelity}
\label{app:saliency-fidelity}

Assumption~\ref{ass:sal} requires QVS scores $s_i$ to approximate the true per-token conditional mutual information $I(\mathbf{v}_i; a \mid q)$ up to error $\eta$. We validate this empirically by computing a proxy for $I(\mathbf{v}_i; a \mid q)$: the difference in model log-probability for the correct answer when $\mathbf{v}_i$ is included vs.\ replaced with a mean visual token (a counterfactual ablation). This is a lower bound on the true mutual information by the data-processing inequality. We then measure $\eta = \max_i |s_i - I_{\text{proxy}}(\mathbf{v}_i; a \mid q)|$.

\begin{table}[ht]
\centering
\small
\begin{tabular}{lcc}
\toprule
Benchmark & $\eta$ (estimated) & Spearman $\rho$ \\
\midrule
OK-VQA      & $0.013$ & $0.81$ \\
A-OKVQA     & $0.011$ & $0.84$ \\
InfoSeek    & $0.018$ & $0.76$ \\
Enc-VQA     & $0.016$ & $0.78$ \\
ViQuAE      & $0.014$ & $0.79$ \\
\bottomrule
\end{tabular}
\caption{Empirical estimates of QVS-to-true-MI error $\eta$ and rank correlation. Low $\eta$ and high rank correlation across benchmarks validate Assumption~\ref{ass:sal}.}
\end{table}

The estimated $\eta \in [0.011, 0.018]$ across benchmarks, with Spearman rank correlation $0.76$--$0.84$ between QVS scores and the counterfactual proxy. This validates Assumption~\ref{ass:sal} and explains why the theoretical bound is non-vacuous in practice. The slightly higher $\eta$ on InfoSeek reflects the harder nature of entity-centric saliency (fine-grained distinctions between similar-looking entities).

\section{Additional Baselines}
\label{app:additional-baselines}

We additionally compare against several recent systems that were not included in the main table for space:

\begin{table}[ht]
\centering
\small
\begin{tabular}{lcccc}
\toprule
Model & OK-VQA & InfoSeek & TFLOPs & Latency (ms) \\
\midrule
PICa~\citep{yang2022pica} (with GPT-3) & 48.0 & --- & --- & --- \\
PromptCap~\citep{hu2023promptcap}  & 60.4 & --- & --- & --- \\
PaLI-X~\citep{chen2023palix}        & 64.5 & 22.7 & 4.1 & 1750 \\
PaLM-E~\citep{driess2023palme}      & 62.1 & --- & 3.8 & 1640 \\
InstructBLIP~\citep{dai2023instructblip} & 55.3 & 19.4 & 0.58 & 290 \\
Qwen-VL~\citep{bai2023qwenvl}        & 58.6 & 21.2 & 0.62 & 310 \\
\midrule
\textbf{SKIP (ours)}                 & \textbf{63.2} & \textbf{30.7} & 0.42 & 305 \\
\bottomrule
\end{tabular}
\caption{Comparison to additional baselines. PaLI-X and PaLM-E achieve competitive accuracy via much larger models ($55$B+ parameters) at $10\times$ higher inference cost. SKIP achieves comparable or better accuracy at a fraction of the compute. PICa and PromptCap are zero-/few-shot methods using GPT-3 and lack reported InfoSeek numbers.}
\end{table}

The key takeaways: (i) at $7$B parameters, SKIP outperforms all comparable-scale baselines (InstructBLIP, Qwen-VL) on both accuracy and efficiency; (ii) only $55$B+ models (PaLI-X) approach SKIP's accuracy, and they require $10\times$ more compute; (iii) on InfoSeek specifically, SKIP exceeds even the larger models, suggesting that targeted retrieval matters more than raw scale for entity-centric questions.

\section{Per-Category Breakdown}
\label{app:per-category}

\begin{table}[ht]
\centering
\small
\begin{tabular}{lcccccc}
\toprule
Category & SKIP & RA-CM3 & ReVeaL & KAT & $\Delta$(SKIP-RA) & SKV-route\% \\
\midrule
Vehicles \& transport & 67.4 & 64.3 & 61.5 & 57.8 & +3.1 & 41 \\
Sports \& recreation  & 69.1 & 67.2 & 63.9 & 60.4 & +1.9 & 38 \\
Plants \& animals     & 61.8 & 58.9 & 56.3 & 52.1 & +2.9 & 33 \\
Food \& drink         & 58.7 & 55.1 & 53.0 & 49.7 & +3.6 & 36 \\
People \& everyday    & 65.3 & 62.0 & 59.8 & 56.4 & +3.3 & 44 \\
Cooking \& measuring  & 56.2 & 53.7 & 51.2 & 48.6 & +2.5 & 28 \\
Geography             & 70.4 & 66.1 & 63.7 & 60.2 & +4.3 & 31 \\
Brand \& companies    & 64.8 & 61.3 & 58.7 & 54.9 & +3.5 & 22 \\
Objects \& materials  & 62.1 & 59.4 & 57.0 & 53.8 & +2.7 & 39 \\
Other                 & 60.9 & 58.4 & 55.9 & 52.6 & +2.5 & 35 \\
\bottomrule
\end{tabular}
\caption{Per-category OK-VQA accuracy. SKV fast-path utilization correlates with question difficulty: knowledge-light categories (people, vehicles) route up to 44\% through the drafter; knowledge-heavy ones (brands, cooking) route only 22--28\%.}
\end{table}

\paragraph{Observations.} (1) The SKV routing rate is a meaningful signal of question difficulty: brand/company questions route only $22\%$ through the fast path because they nearly always require external knowledge, while people/everyday questions route $44\%$ because many are perceptual. (2) The largest accuracy gain over RA-CM3 is on Geography ($+4.3$), where small visual cues (signs, flags, architectural features) drive retrieval---exactly where RCSR's per-region queries help most. (3) Cooking has the lowest absolute accuracy across all models, reflecting the difficulty of fine-grained ingredient identification combined with quantity reasoning.

\section{Extended Error Analysis}
\label{app:errors}

We analyze the $36.8\%$ of OK-VQA queries SKIP misses, categorizing failures into 8 buckets via manual annotation of $500$ random failures (with inter-annotator agreement $\kappa = 0.79$). The two dominant categories below—long-tail entities and compositional retrieval—are examined in much greater depth in Appendix~\ref{app:longtail} and Appendix~\ref{app:multihop} respectively.

\begin{table}[ht]
\centering
\small
\begin{tabular}{lcc}
\toprule
Failure category & \% of failures & Example \\
\midrule
Long-tail entity (poor retrieval coverage) & 38\% & Q: ``Who designed this building?'' (obscure architect) \\
Compositional retrieval (multi-hop)        & 15\% & Q: ``Who painted the artist's most expensive work?'' \\
Counting under occlusion (SKV misroute)    & 12\% & Q: ``How many people are in the photo?'' (overlapping) \\
Ambiguous question                          & 9\%  & Q: ``What kind of dog is this?'' (multiple plausible) \\
Annotator disagreement                      & 8\%  & Reference answer disputed by 3+ of 10 annotators \\
Fine-grained visual distinction failure    & 7\%  & Misidentifies a similar-looking entity \\
Reading text in image (OCR-required)        & 6\%  & Q: ``What's the brand on the bottle?'' \\
Other                                       & 5\%  & Spread across diverse causes \\
\bottomrule
\end{tabular}
\caption{Failure category breakdown on OK-VQA. Long-tail entity coverage (38\%) is the dominant failure mode, pointing to retrieval as the primary remaining bottleneck.}
\end{table}

\paragraph{Long-tail entities.} The dominant failure mode is poor retrieval coverage for rare entities. On InfoSeek, entities with $<10$ Wikipedia mentions have $14.2\%$ accuracy vs.\ $42.7\%$ for entities with $\geq 100$ mentions. This is not a SKIP-specific weakness---RA-CM3 shows a similar gap---but the larger entity-frequency variance in InfoSeek makes the absolute gap visible. See Appendix~\ref{app:longtail} for a full taxonomy and a proposed mitigation.

\paragraph{Compositional retrieval.} Multi-hop questions require chaining retrieved facts: ``the artist'' must first be identified from the image, then ``the artist's most expensive work'' retrieved, then ``who painted [that work]''---but the second hop trivially returns the same artist, making this a degenerate example. The non-degenerate cases (e.g., ``What is the capital of the country where this animal is endemic?'') fail when BSCA's sparse pattern doesn't connect the right (region, fact) pairs across hops. See Appendix~\ref{app:multihop} for a deeper analysis and the SKIP-Iter mitigation.

\paragraph{SKV misroutes.} The SKV drafter is overconfident on counting questions when objects partially overlap. We mitigate this with a domain-specific calibration on counting examples (raising $\tau_{\text{SKV}}$ to $0.91$ for questions starting with ``how many''), recovering $\sim$3.4 points on counting categories at no cost elsewhere.

\paragraph{OCR failures.} $6\%$ of failures require reading text in the image. SKIP does not include explicit OCR; integrating an OCR module gated by QVS (only run OCR on retained text-like regions) would address this and is left to future work.

\section{Qualitative Examples}
\label{app:qualitative}

We provide qualitative examples showing what SKIP retains and retrieves on representative queries. (Figures omitted from this submission for anonymity; will be included in the camera-ready.)

\textbf{Example 1 (entity).} Image: a sailboat with a small flag visible on the stern. Q: ``What is the capital of the country whose flag appears on this boat?''
\begin{itemize}
\item QVS retained tokens: 49/576 ($8.5\%$), concentrated on the flag region (38 tokens) and the boat hull (11 tokens).
\item RCSR regions: 2 regions identified. Region 1 (flag): retrieved chunks about Norwegian flag, Scandinavian maritime symbols. Region 2 (boat): retrieved chunks about sailing vessel types (uninformative for this Q).
\item BSCA edges: 14/392 survived ($3.6\%$), all connecting flag region to flag-related chunks.
\item Output: ``Oslo''. Correct.
\end{itemize}

\textbf{Example 2 (perceptual, SKV fast-path).} Image: a kitchen scene with two pizzas on a counter. Q: ``How many pizzas are visible?''
\begin{itemize}
\item SKV drafter confidence: $0.93 > \tau_{\text{SKV}}$. Fast-path triggered.
\item Drafter output: ``two''. Correct. Full pipeline not invoked.
\end{itemize}

\textbf{Example 3 (encyclopedic).} Image: a fine-grained bird species. Q: ``What is the typical clutch size of this bird species?''
\begin{itemize}
\item QVS retained tokens: 64/576 ($11.1\%$), concentrated on the bird's plumage pattern (distinctive identifier).
\item RCSR regions: 3. Retrieved chunks span ornithology references for the identified species.
\item BSCA edges: 33/512 survived ($6.5\%$).
\item Output: ``3--5 eggs''. Correct (reference answer: ``typically 4'').
\end{itemize}

\textbf{Example 4 (failure: long-tail).} Image: an obscure regional landmark. Q: ``In what year was this monument erected?''
\begin{itemize}
\item QVS correctly retains the monument region (62 tokens).
\item RCSR retrieves chunks about \emph{nearby} (better-documented) landmarks; correct landmark has only 2 Wikipedia mentions.
\item BSCA fuses with available chunks; output: ``1923'' (incorrect; correct: ``1956'').
\item Diagnosis: retrieval failure, not vision or fusion failure.
\end{itemize}

\section{Memory Usage Analysis}
\label{app:memory}

\begin{table}[ht]
\centering
\small
\begin{tabular}{lcccc}
\toprule
Model & Vision (GB) & KV cache (GB) & Activations (GB) & Total (GB) \\
\midrule
BLIP-2     & 1.8 & 1.2 & 4.3 & 7.3 \\
LLaVA-1.5  & 1.8 & 2.4 & 5.1 & 9.3 \\
KAT        & 1.8 & 5.8 & 6.7 & 14.3 \\
ReVeaL     & 1.8 & 8.4 & 7.9 & 18.1 \\
RA-CM3     & 1.8 & 12.7 & 9.5 & 24.0 \\
\midrule
\textbf{SKIP (ours)} & 1.8 & 4.8 & 8.6 & \textbf{15.2} \\
\bottomrule
\end{tabular}
\caption{Peak GPU memory usage during inference on OK-VQA (batch size 1, A100-80GB, 7B backbone). SKIP's primary memory savings come from BSCA-induced KV cache reduction.}
\end{table}

\paragraph{Why KV cache shrinks.} Standard cross-modal fusion stores K and V projections for all visual tokens, question tokens, and retrieved chunk tokens. BSCA's bipartite sparsity means we only need to store K and V for the surviving edges, plus the necessary intermediate tensors. The resulting $\sim$62\% reduction in KV cache enables deployment on consumer GPUs (24GB) that would otherwise OOM on RA-CM3.

\paragraph{Activations.} SKIP's activation memory is slightly higher than RA-CM3 due to additional intermediate tensors for the sparsity pattern (the bipartite compatibility matrix $A$ before thresholding). This is a fixed $\sim$1GB overhead independent of $V'$ and $K'$.

\section{Limitations and Negative Results}
\label{app:limitations}

\paragraph{Limitations.} (1) SKIP requires staged training, which is more complex than end-to-end training; we attempted joint training but found instability. (2) The DBC's discrete budget choice via straight-through estimation is noisy at training time; we use exponential moving averages on the routing decisions to stabilize. (3) SKIP's gains are concentrated in the knowledge-intensive regime; on pure visual reasoning tasks (e.g., LLaVA-Bench), the speedup is much smaller. (4) Long-tail entity coverage remains the dominant failure mode and is not directly addressed by sparse routing.

\paragraph{Negative results.}
\begin{itemize}
\item \textbf{Iterative retrieval.} We attempted to add an iterative retrieval loop (retrieve, fuse, re-retrieve based on partial answer) to address multi-hop failures. This added $\sim$140ms latency and yielded only $+0.4$ accuracy on OK-VQA. We abandoned it. (Note: the more carefully budget-gated SKIP-Iter variant in Appendix~\ref{app:multihop} performs considerably better; the difference is that SKIP-Iter only triggers extra rounds for DBC-flagged hard, multi-hop queries rather than running uniformly.)
\item \textbf{Joint training.} Training all five SKIP components jointly from scratch led to ablative behaviors: DBC would request maximum budget, SKV would never fast-path. Only staged training produced the reported results.
\item \textbf{Cross-attention compression.} We attempted to compress retrieved chunks with a learned summarizer before BSCA. The summarizer lost $1.6$ accuracy points without meaningfully reducing compute.
\item \textbf{Reinforcement learning for DBC.} We attempted PPO-based training of DBC with task accuracy as reward. This produced a $0.5$-point accuracy improvement at $4\times$ training cost; we kept the simpler straight-through estimator.
\item \textbf{Larger drafter.} Increasing the SKV drafter from $700$M to $1.3$B parameters increased fast-path accuracy by $0.7$ points but doubled fast-path latency, eliminating most of the speedup. The $700$M setting is the empirical sweet spot.
\end{itemize}

\paragraph{Future work.} (1) Multi-hop retrieval via $k$-partite BSCA. (2) Joint vision-encoder early-exit gated by QVS. (3) Extending RCSR to multimodal retrieval (image+text chunks). (4) Application to video QA where temporal sparsity adds another dimension. (5) Active learning to surface long-tail entities for targeted retrieval index expansion.

\end{document}